\newcommand\extrafootertext[1]{%
    \bgroup
    \renewcommand\thefootnote{\fnsymbol{footnote}}%
    \renewcommand\thempfootnote{\fnsymbol{mpfootnote}}%
    \footnotetext[0]{#1}%
    \egroup
}
\newtheorem{theorem}{Theorem}[section]\crefname{theorem}{Theorem}{Theorems}
\newtheorem*{theorem*}{Theorem}\crefname{theorem}{Theorem}{Theorems}
\newtheorem{lemma}[theorem]{Lemma}\crefname{lemma}{Lemma}{Lemmas}
\crefname{claim}{Claim}{Claims}
\crefname{fact}{Fact}{Facts}
\newtheorem{definition}[theorem]{Definition}\crefname{definition}{Definition}{Definitions}
\newtheorem{proposition}[theorem]{Proposition}\crefname{proposition}{Proposition}{Propositions}
\crefname{question}{Question}{Questions}
\crefname{problem}{Problem}{Problems}
\crefname{remark}{Remark}{Remarks}
\crefname{corollary}{Corollary}{Corollaries}
\numberwithin{equation}{section}
\newcommand{\N}{\mathbb N}
\newcommand{\R}{\mathbb{R}}
\newcommand{\cH}{\mathcal{H}} 
\newcommand{\cN}{\mathcal{N}} 
\newcommand{\cT}{\mathcal{T}} 
\newcommand{\Exp}{\mathbb{E}}
\renewcommand{\P}{\mathbb{P}}
\newcommand{\1}{\mathbb{1}}
\DeclareMathOperator{\poly}{poly}
\DeclareMathOperator{\diag}{diag}
\newcommand{\LF}{\mathrm{leapfrog}}
\newcommand{\TV}{\mathrm{TV}}
\renewcommand{\vec}[1]{\bm{#1}}
\newcommand{\dx}{\dif x}
\newcommand{\eps}{\varepsilon}
\newcommand{\tO}{\widetilde{O}}
\newcommand{\tOm}{\widetilde{\Omega}}
\newcommand{\im}{\mathbf{i}\,}
\begin{document}

\title{Hamiltonian Monte Carlo for efficient Gaussian sampling:\\ long and random steps}
\author{Simon Apers\thanks{Universit\'e Paris Cit\'e, CNRS, IRIF, F-75013, Paris, France.
Email: \href{mailto:apers@irif.fr}{apers@irif.fr}, \href{mailto:gribling@irif.fr}{gribling@irif.fr}, \href{mailto:szilagyi.d@gmail.com}{szilagyi.d@gmail.com}
} \and Sander Gribling\footnotemark[1] \and D\'aniel Szil\'agyi\footnotemark[1]}
\date{}

\maketitle
\extrafootertext{\llap{\textsuperscript{$\dag$}}We thank Alain Durmus for useful discussions.}

\begin{abstract}
Hamiltonian Monte Carlo (HMC) is a Markov chain algorithm for sampling from a high-dimensional distribution with density $e^{-f(x)}$, given access to the gradient of $f$.
A particular case of interest is that of a $d$-dimensional Gaussian distribution with covariance matrix $\Sigma$, in which case $f(x) = x^\top \Sigma^{-1} x$.
We show that HMC can sample from a distribution that is $\eps$-close in total variation distance using $\widetilde{O}(\sqrt{\kappa} d^{1/4} \log(1/\eps))$ gradient queries, where $\kappa$ is the condition number of $\Sigma$.

Our algorithm uses \emph{long} and \emph{random} integration times for the Hamiltonian dynamics.
This contrasts with (and was motivated by) recent results that give an $\widetilde\Omega(\kappa d^{1/2})$ query lower bound for HMC with fixed integration times, even for the Gaussian case.
\end{abstract}

\section{Introduction and main result}

One of the most important tasks in statistics and machine learning is to sample from high-dimensional and potentially complicated distributions.
Markov chains are an efficient means for sampling from such distributions, and there is a wide variety of Markov chain algorithms designed specifically for this purpose.
Typically, the main difficulty in analyzing these algorithms is to bound the precise running time or \emph{mixing time} of the Markov chain.
While many algorithms have been in very broad (heuristic) usage for several decades, rigorous bounds on their performance are often missing.
A key example is the \emph{Hamiltonian Monte Carlo} (HMC) algorithm \cite{Duane1987HybridMonteCarlo}.
This is an elegant Markov chain algorithm that utilizes Hamiltonian dynamics to efficiently explore the state space, without straying too far away from the high probability region.
One of its key features is that it overcomes the slow, diffusive behavior that is inherent to ``small step'' approaches such as the ball walk and Langevin algorithm.
While this is indeed observed in heuristic uses and studies of the HMC algorithm \cite{Neal2011MCMCUsingHamiltonian}, recent efforts in proving theoretical bounds are mostly restricted to step sizes much shorter than the heuristic choices \cite{Chen2020FastMixingMetropolized,Chen2022OptimalConvergenceRate}.
In this work, we prove seemingly optimal bounds on the HMC algorithm (with leapfrog integrator) for the special case of Gaussian distributions.
This is the typical gateway to more complicated distributions such as logconcave or multimodal distributions.
Our implementation of HMC exploits long and randomized integration times.
This surpasses recent roadblocks on sampling Gaussian distributions using HMC with either short \cite{Chen2022OptimalConvergenceRate} or deterministic \cite{Lee2021LowerBoundsMetropolized} integration times.

Our bounds are stated most easily in the ``black box model'', where the goal is to sample from a density of the form $e^{-f(x)}$ for $x \in \R^d$, and we are given query access to both $f$ and its gradient $\nabla f$.
The Gaussian case further restricts $f$ to be a quadratic form $f(x) = \frac{1}{2} (x-\mu)^\top \Sigma^{-1} (x-\mu)$, where $\mu$ and $\Sigma$ are the (unknown) mean and covariance matrix of the Gaussian, respectively.
The \emph{condition number} of the Gaussian distribution is simply the condition number of $\Sigma^{-1}$. Throughout we assume that we are given bounds $0 < \alpha \leq \beta$ such that $\alpha I \preceq \Sigma^{-1} \preceq \beta I$ and we use $\kappa = \beta/\alpha$ as an upper bound on the condition number.
We prove the following theorem.
\begin{theorem*}[informal version of \cref{thm:adjusted-mixing-time}]
The Metropolis-adjusted HMC algorithm with leapfrog integrator can sample from a distribution $\eps$-close in total variation distance to a {$d$-dimensional} Gaussian distribution with condition number $\kappa$ using a total number of gradient evaluations\footnote{We use the $\tO(\cdot)$-notation to hide polylogarithmic factors in the problem parameters $d$, $\alpha$, $\beta$ and $\log(1/\eps)$.}
\[
\tO(\sqrt{\kappa} d^{1/4} \log(1/\eps)).
\]
\end{theorem*}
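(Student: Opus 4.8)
The plan is to exploit the fact that, in the eigenbasis of $\Sigma^{-1}$, both the Gaussian target and the leapfrog integrator decouple completely. Writing $\lambda_1,\dots,\lambda_d\in[\alpha,\beta]$ for the eigenvalues of $\Sigma^{-1}$ and $\omega_i=\sqrt{\lambda_i}$, the Hamiltonian flow of $H(x,p)=\tfrac12 (x-\mu)^\top\Sigma^{-1}(x-\mu)+\tfrac12\|p\|^2$ acts on each coordinate pair $(x_i,p_i)$ as an independent harmonic oscillator of frequency $\omega_i$, and one leapfrog step is an explicit $2\times2$ linear map that I would diagonalize to read off an effective frequency $\tilde\omega_i=\omega_i(1+O((\omega_i h)^2))$, with stability exactly when $\omega_i h<2$. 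The whole analysis then reduces to $d$ essentially independent one-dimensional chains, which I recombine at the end by tensorization.

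First I would fix the step size $h$ from the Metropolis acceptance probability. Starting from stationarity, the energy error $\Delta H=\sum_i\delta_i$ is a sum of $d$ independent per-mode contributions. The crucial quantitative point — and the source of the $d^{1/4}$ — is that, although each $\delta_i$ has size $O((\omega_i h)^2)$ pointwise, its leading oscillatory part averages to zero under the stationary phase, so that $\Exp[\delta_i]=\Theta((\omega_i h)^4)$ and $\Var[\delta_i]=\Theta((\omega_i h)^4)$. Summing over the $d$ modes makes $\Exp[\Delta H]$ and $\Var[\Delta H]$ both of order $d(\sqrt\beta h)^4$, so choosing $h=\Theta(d^{-1/4}/\sqrt\beta)$ gives $\Delta H=O(1)$ with $O(1)$ fluctuations, hence acceptance probability $\Omega(1)$. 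I would verify the cancellation by expanding the leapfrog map to the relevant order and using the fluctuation identity $\Exp[\Delta H]\approx\tfrac12\Var[\Delta H]$ as a consistency check. A key structural advantage I would emphasize is that, because leapfrog is symplectic and exactly conserves a nearby quadratic form, the per-mode energy error stays bounded uniformly in the number of steps; this is what lets us take long integration times without degrading acceptance.

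Next I would choose the integration time $T$ at random, uniform on $[0,T_{\max}]$ with $T_{\max}=\Theta(1/\sqrt\alpha)$, and set the number of leapfrog steps to $\ell=T/h$. Under the synchronous coupling that shares both the resampled momentum and $T$, a single accepted step multiplies the $i$-th coordinate discrepancy by $\cos(\tilde\omega_i T)$, and the randomness of $T$ is precisely what defeats the resonances that doom a fixed integration time: since $T_{\max}\ge \pi/\sqrt\alpha$ covers at least a half-period of the slowest mode, $\Exp_T[\cos^2(\tilde\omega_i T)]=\tfrac12+\tfrac{\sin(2\tilde\omega_i T_{\max})}{4\tilde\omega_i T_{\max}}\le\tfrac12+\tfrac1{4\pi}<1$ simultaneously for every mode. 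Combined with the $\Omega(1)$ acceptance, this shows that each step contracts the second moment of each coordinate towards stationarity by a constant factor, so $k=O(\log(d/\eps))$ steps bring every mode within $\eps/d$; tensorizing and passing from per-mode $W_2$/KL control to total variation (via Pinsker and the explicit Gaussian formulas) yields $\eps$-closeness overall. The total gradient count is $\Exp[\sum_{\text{steps}}\ell]=O(k\,T_{\max}/h)=\tO(\sqrt\kappa\,d^{1/4}\log(1/\eps))$.

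The hard part will be controlling the interaction between the Metropolis filter and the contraction. The clean ``multiply by $\cos(\tilde\omega_i T)$'' picture holds only on accepted steps, and under any coupling of two chains the two accept/reject decisions need not agree, which breaks exact linearity. I would handle this by coupling the accept/reject randomness together with the momentum and $T$, and arguing that the expected one-step contraction factor is a convex combination $a_i\,\Exp_T[\cos^2(\tilde\omega_iT)]+(1-a_i)$ with $a_i=\Omega(1)$, which is still bounded below $1$; making this rigorous requires a rejection-coupling estimate bounding the probability that the two decisions differ and showing these discrepancies contribute only lower-order terms. A secondary technical point is that the effective frequencies $\tilde\omega_i$, not the true $\omega_i$, enter both the acceptance and the contraction, so I must carry the $O((\omega_i h)^2)$ corrections through all estimates and check that they are absorbed by the constants and by the $\tO(\cdot)$.
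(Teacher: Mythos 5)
Your core ingredients match the paper's: the coordinate-wise decoupling, the observation that leapfrog exactly conserves a nearby quadratic Hamiltonian with shifted frequencies (so the energy error per mode stays bounded uniformly in the number of integration steps), the choice $h=\Theta(d^{-1/4}/\sqrt\beta)$ forced by the $\Theta(d(\sqrt\beta h)^4)$ concentration of the energy error, the randomized integration time of length $\Theta(1/\sqrt\alpha)$ to kill resonances, and the resulting $\sqrt\kappa\, d^{1/4}$ cost per step. The paper's versions of these are Lemma~\ref{lem: discrete T proof} (with a \emph{discrete} set $\cT$ of multiples of $\delta$, so that $t/\delta$ is an integer --- a detail you should fix in your continuous-$T$ formulation), equation~\eqref{eq: distance H and hat H}, and the Hanson--Wright concentration of \cref{lem:concentration Egamma}.

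The genuine gap is in the step you yourself flag as ``the hard part'': the interaction of the Metropolis filter with your synchronous coupling. Your proposed resolution --- that the one-step contraction factor is a convex combination $a_i\,\Exp_T[\cos^2(\tilde\omega_i T)]+(1-a_i)$ --- is not correct as stated. That formula would hold if, on the complement of the joint-acceptance event, the discrepancy were simply \emph{unchanged}; but that only happens when \emph{both} chains reject. On the event that exactly one of the two coupled chains accepts, the discrepancy jumps from $\|x-y\|$ to something like $\|x'-y\|$, which is of the order of the ambient scale $\sqrt{d/\alpha}$ rather than of order $\|x-y\|$. To absorb this you would need the disagreement probability to be bounded by a quantity proportional to $\|x-y\|^2$ divided by that ambient scale squared, uniformly as $\|x-y\|\to 0$; a first-order bound of the form $\P[\text{disagree}]\lesssim |A(x,x')-A(y,y')| \lesssim \delta^2\beta^2(\|x\|+\|y\|)\|x-y\|$ gives a contribution to the second moment of order $\|x-y\|\cdot(\|x\|+\|y\|)^3\delta^2\beta^2$, which does \emph{not} vanish relative to $\|x-y\|^2$ for close pairs. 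A related secondary gap: once the filter is applied the one-step law is no longer Gaussian, so your plan to pass from second-moment/$W_2$ control to total variation ``via the explicit Gaussian formulas'' no longer applies to the adjusted chain. The paper avoids both problems by never coupling the adjusted chain at all: it proves contraction only for the \emph{unadjusted} chain (which is exact HMC for $\hat\pi$, hence conditionally Gaussian), and then transfers to the adjusted chain through $s$-conductance (\cref{lem:bound-Cs}) using the uniform lower bound $A(x,x')\ge 99/100$ on a set $E_\gamma$ of $\pi$- and $\hat\pi$-measure $1-s$, together with the Lov\'asz--Simonovits bound (\cref{lem:s-cond-bound}) and $\hat\pi$ as a warm start (\cref{lem:s-cond-warm-start}). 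If you want to complete your argument along your own lines, you either need a genuine rejection-coupling estimate of the type described above (which is delicate and not sketched in your proposal), or you should switch to a conductance-type argument as the paper does.
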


This theorem builds on an analysis of the \emph{unadjusted} HMC algorithm, for which we get a bound of $\tO(\sqrt{\kappa} d^{1/4}/\sqrt{\eps})$ on the total number of gradient evaluations.
Both bounds seem in line with expectation \cite{Duane1987HybridMonteCarlo,Neal2011MCMCUsingHamiltonian,Beskos2013OptimalTuningHybrid}, and we expect they are tight when using the usual leapfrog integrator for simulating the Hamiltonian dynamics.
Our algorithm surpasses the $\tOm(\kappa \sqrt{d})$ lower bound on the complexity of HMC for Gaussian sampling from \cite{Lee2021LowerBoundsMetropolized} by using \emph{randomized} integration times.
This avoids the well-known periodicity issues associated to a deterministic integration time.

Our work fits within the recent effort of proving non-asymptotic (and often tight) bounds on Markov chain algorithms for constrained distributions such as Gaussian distributions and, more generally, logconcave distributions (where $f$ is assumed to be convex).
Most of these efforts have focused on short step dynamics such as the ball walk, the Langevin algorithm, and HMC with short integration times.
The use of such ``local steps'' makes it easier to control the stability and acceptance probability of the algorithm.
However, the restriction to short step dynamics is also what slows down these algorithms, and this is what we avoid in our HMC algorithm.

Another motivation for studying Gaussian sampling is that the restriction to sampling Gaussian and logconcave distributions precisely parallels the restriction to quadratic and convex functions in optimization.
Nonetheless, a gap between the (first-order oracle) complexity for logconcave sampling and the $O(\min\{\sqrt{\kappa},d\})$ complexity for convex optimization is apparently deemed plausible.
More specifically, the authors in \cite{Lee2020LogsmoothGradientConcentration} suggest an $\Omega(\kappa)$ lower bound for logconcave sampling.
Our work shows that a sublinear $\kappa$-dependency is possible at least for the special case of Gaussian distributions, and we see it as evidence that a general $O(\sqrt{\kappa})$ bound for logconcave sampling is achievable.

Finally, as a direct application of our work, we mention the use of Gaussian sampling in the contextual multi-armed bandit problem \cite{Agrawal2012AnalysisThompsonSampling}.
A competitive exploration-exploitation strategy for this problem is called \emph{Thompson sampling}, which is an efficient manner of maintaining a posterior on the set of arms.
In the case of a linear payoff, as is considered in \cite{Agrawal2013ThompsonSamplingContextual}, the prior and posterior distributions are Gaussian distributions.
While recent works suggested the use of Langevin dynamics for Thompson sampling \cite{Mazumdar2020ApproximateThompsonSampling,Xu2022LangevinMonteCarlo}, our work suggests that the use of Hamiltonian Monte Carlo leads to faster algorithms.

\subsection{Background and prior work}

There is a vast body of work on the use of Markov chain algorithms for sampling from Gaussian and logconcave distributions.
These works mostly consider the (Metropolized) random walk or ball walk (MRW), the Metropolis-adjusted Langevin algorithm (MALA), and HMC.
We discuss those works most directly related to ours.

The earliest works focus on \emph{asymptotic} bounds or scaling limits on the performance as $d \to \infty$.
A $d^{1/4}$-scaling was already suggested in \cite{Duane1987HybridMonteCarlo,Kennedy1991AcceptancesAutocorrelationsHybrid,Beskos2013OptimalTuningHybrid} for the complexity of HMC with leapfrog integrator for Gaussians and logconcave product distributions.
This improves over the expected $d$- and $d^{1/3}$-scalings of MRW and MALA, respectively.
Indeed, in a recent work by Chewi et al.~\cite{Chewi2021OptimalDimensionDependence} it was proven that the complexity of MALA for standard Gaussian distributions (with $\kappa = 1$) scales as $\tO(d^{1/3})$.
For leapfrog HMC, the only non-asymptotic bounds scaling with $d^{1/4}$ seem to have been proven recently in \cite{Mangoubi2018DimensionallyTightBounds,Mou2021HighOrderLangevinDiffusion} for the \emph{unadjusted} HMC chain, and under additional regularity assumptions.
While these assumptions include Gaussians, the final complexities in these works scale at least with $\kappa^2$ and $1/\sqrt{\eps}$, and so scale much worse in terms of both $\kappa$ and $\eps$ compared to our bound.

An improved (linear) $\kappa$-dependency is obtained in recent works on MALA \cite{Dwivedi2018LogconcaveSamplingMetropolisHastings,Lee2020LogsmoothGradientConcentration,Wu2021MinimaxMixingTime} and HMC \cite{Chen2020FastMixingMetropolized}.
This seems optimal based on the $\tOm(\kappa \sqrt{d})$ lower bounds on MALA and HMC from \cite{Wu2021MinimaxMixingTime,Lee2021LowerBoundsMetropolized}, which even apply to the Gaussian case.
Such lower bounds typically follow from either restricting to \emph{short} integration times (as with MALA), which leads to diffusive behavior, or \emph{fixed} integration times, which can lead to periodic behavior in the HMC algorithm.
Either of these restrictions leads to an $\Omega(\kappa)$-dependency, and indeed we are not aware of any former non-asymptotic bounds on the mixing time achieving a \emph{sublinear} $\kappa$-dependency (while using a numerical integrator).
We sidestep these issues by using both \emph{long} and \emph{random} integration times.
Analyzing the resulting algorithm can be significantly more involved, and for this we restrict our analysis to the Gaussian case.
It however seems likely that this will form a gateway to proving $\sqrt{\kappa}$-scalings for general logconcave distributions.

The use of nonconstant integration times was also studied recently in the \emph{randomized} HMC algorithm by Bou-Rabee and Sanz-Serna \cite{Bou-Rabee2017RandomizedHamiltonianMonte}.
Similarly to our work, they motivate their algorithm by looking at the Gaussian case, and obtain similar scalings to our work for properties such as the autocorrelation time and mean displacement.
In follow-up works \cite{Deligiannidis2021RandomizedHamiltonianMonte,Lu2022ExplicitL2convergenceRate} (and \cite{
Wang2022AcceleratingHamiltonianMonte,Jiang2022DissipationIdealHamiltonian} restricted to the Gaussian case) bounds similar to ours are proven on the relaxation time.
However, all of these results are proven only for the \emph{idealized} case, and do not take into account the errors that arise from numerical integration.

Finally, for completeness we also mention that there are algorithms for Gaussian sampling that are \emph{not} based on Markov chains.
While these are generally incomparable (e.g., they require access to the precision or covariance matrix rather than gradient), we refer the interested reader to \cite{Vono2022HighDimensionalGaussianSampling}.

\subsection{Organization and proof overview}
In \cref{sec:intro} we formally introduce the problem and describe preliminaries related to Markov chains and Hamiltonian dynamics.
In particular, for the Gaussian case, we discuss how the numerical leapfrog integrator exactly integrates the Hamiltonian of a closely related Gaussian. In \cref{sec:idealized} we bound the mixing time of the HMC algorithm with an idealized integrator.
Using the observation about the leapfrog integrator, this mixing time extends to the ``unadjusted'' HMC algorithm, which is an exact HMC algorithm for a slightly perturbed Hamiltonian (and hence has a slightly perturbed stationary distribution). Finally, in \cref{sec:MAHMC}, we consider the Metropolis-adjusted HMC algorithm with leapfrog integrator.
This algorithm has the correct stationary distribution, but the mixing time might increase due to an additional accept-reject step.
We use high-dimensional concentration bounds (in particular, the Hanson-Wright inequality) to show that the acceptance rate is usually large.
This suffices to bound the mixing time through the use of $s$-conductance, which proves our main result.

\section{Problem definition and preliminaries} \label{sec:intro}

\subsection{Gaussian sampling} \label{sec:gaussian-sampling}

We consider a $d$-dimensional Gaussian distribution with unknown precision matrix $B$ (equal to the inverse of the covariance matrix, $B = \Sigma^{-1}$) and mean $\mu = 0$.\footnote{This is without loss of generality. Using $\tO(\sqrt{\kappa})$ gradient queries we can always determine the mean up to high precision and then translate the Gaussian to the origin.}
In such case, the Gaussian distribution is $\pi(x) \propto \exp(-f(x))$ with $f(x) = \frac{1}{2} x^\top B x$ for $x \in \R^d$ and $B$ a positive definite matrix. The algorithms we use (Hamiltonian Monte Carlo with a leapfrog integrator) are basis invariant, and so for ease of notation we will assume throughout that $B$ is diagonal with $B_{ii} = \omega_i^2$ for each $i \in [d]$. As input, we are given bounds $0 < \alpha \leq \beta$ such that $\alpha I \preceq B \preceq \beta I$, or, equivalently, $\alpha \leq \omega_i^2 \leq \beta$.
The condition number of $B$ is upper bounded by $\kappa = \beta/\alpha$ and we will also call this the condition number of $\pi$.
We assume \emph{first-order query access} to $f$, which means that a single query at a point $x \in \R^d$ provides both $f(x)$ and $\nabla f(x) = Bx$. The goal is to return a sample from a distribution that is $\eps$-close to $\pi$ in total variation distance, while making a minimal number of gradient queries to $f$.

\subsection{Markov chains on \texorpdfstring{$\R^d$}{Rd}}

Throughout we work with Markov chains whose behaviour can be described as follows: when at $x \in \R^d$ move to $y \in \R^d$ with probability density $T(x,y) \geq 0$. We identify the Markov chain with the transition kernel (density) $T: \R^d \times \R^d \to \R_+$. For a fixed $x \in \R^d$ we use $T_x$ to denote the probability distribution on $\R^d$ with density $T(x,\cdot)$.
Similarly (with some abuse of notation), we denote by $T_\mu$ the probability distribution on $\R^d$ with density $\int \mu(x) T(x,\cdot) \dif x$.
The $K$-step transition kernel $T^K$ is defined recursively via $T^{K}(x,y) = \int_{\R^d} T^{K-1}(x,z) T(z,y) \dif z$ for $K>1$. We say that $T$ satisfies the \emph{detailed balance condition} with respect to the probability density $\pi:\R^d \to \R_+$ if 
\[
\pi(x) T(x,y) = \pi(y) T(y,x) \quad \text{ for all } x,y \in \R^d.
\]
The associated Markov chain is called \emph{reversible}.

\subsection{Hamiltonian dynamics, harmonic oscillator and leapfrog integrator} \label{sec:harmonic oscillator intro}

At its core, Hamiltonian Monte Carlo makes moves by integrating Hamiltonian dynamics.
In general, these describe the evolution of a physical system parameterized by (generalized) \emph{positions} and (generalized) \emph{momenta}. For the purposes of this paper, we denote the former with $x \in \R^d$ and the latter with $v \in \R^d$. We sometimes refer to $v$ as the \emph{velocity}, which in classical physics is equal to the momentum of a unit mass. The Hamiltonian evolution of a $d$-dimensional system is governed by its Hamiltonian $\cH: \R^d \times \R^d \to \R$, which can be understood as the total energy of the system at position $x \in \R^d$ and with velocity $v\in \R^d$. The evolution of the system is described by the following equations:
\[
    \frac{\dif x}{\dif t} = \frac{\partial \cH(x, v)}{\partial v},\quad \frac{\dif v}{\dif t} = -\frac{\partial \cH(x, v)}{\partial x}.
\]
The simplest example is the (one-dimensional) harmonic oscillator with Hamiltonian $\cH(x, v) = \frac12 \omega^2 x^2 + \frac12 v^2$ for some given $\omega > 0$.
Its evolution is described by $\frac{\dif x}{\dif t} = v$ and $\frac{\dif v}{\dif t} = -\omega^2 x$, which can be solved analytically to yield
\begin{equation} \label{eq:harm-osc}
\begin{bmatrix} x(t) \\ v(t) \end{bmatrix}
= \begin{bmatrix}
\cos(\omega t) & \frac{1}{\omega} \sin(\omega t) \\
- \omega \sin(\omega t) & \cos(\omega t) \end{bmatrix}
\begin{bmatrix} x(0) \\ v(0) \end{bmatrix}
\end{equation}
A more interesting example is the $d$-dimensional harmonic oscillator. For a given positive (semi-)definite matrix $B \in \R^{d\times d}$, its Hamiltonian is $\cH(x, v) = \frac12 x ^\top B x + \frac12 v^\top v$, and its evolution is described by
\begin{equation}\label{eq:general Hamilton's eqs for HO}
    \frac{\dif x}{\dif t} = v,\quad \frac{\dif v}{\dif t} = -Bx.
\end{equation}
If $B$ has eigenvalues $\omega_i^2$ then in the eigenbasis of $B$ the system effectively decomposes into $d$ independent harmonic oscillators with frequencies $\omega_i$.
When analyzing our algorithms, it is often useful to assume that $B$ is diagonal, so we can treat each coordinate independently. Of course, the algorithms themselves remain basis-independent, and only require the aforementioned bounds $\alpha$ and $\beta$ on the eigenvalues $\omega_i^2$.

\subsubsection{Leapfrog integrator} \label{sec:leapfrog}
The leapfrog integrator, also known as the St\"ormer-Verlet method, is a well-known numerical integrator for Hamiltonian dynamics that uses two queries to $\frac{\partial \cH(x, v)}{\partial x}$ in each integration step. In the Gaussian case we have $\cH(x,v) = \frac12 x^\top Bx + \frac12 v^\top v$ and the propagator takes the following closed form: 
\begin{equation} \label{eq:matrix leapfrog}
    \begin{bmatrix}
    x^{(n+1)} \\
    v^{(n+1)}
    \end{bmatrix} = \begin{bmatrix}
        I - \frac{\delta^2}{2}B & \delta I \\
        -\delta B (I - \frac{\delta}{4}B) & I - \frac{\delta^2}{2} B
    \end{bmatrix} \begin{bmatrix}
    x^{(n)} \\
    v^{(n)}
    \end{bmatrix},
\end{equation}
where $\delta>0$ is a parameter used to describe the integration time. See for example~\cite[Sec.~2.6]{Leimkuhler2005SimulatingHamiltonianDynamics} for details. We will exploit that, similarly as for the idealized Hamiltonian dynamics, the leapfrog dynamics also decouple in the diagonal basis of $B$. Hence, as before, we can assume without loss of generality that $B$ is diagonal with entries $0 < \alpha \leq \omega_i^2 \leq \beta$, and the leapfrog integrator can be interpreted as integrating $d$ independent harmonic oscillators. To understand the leapfrog integrator we can thus restrict to a single harmonic oscillator with parameter $\omega$. 

The propagator from \cref{eq:matrix leapfrog} has eigenvalues
\[
\lambda^\pm
= 1 - \frac{\delta^2 \omega^2}{2}
    \pm \im \delta \omega \sqrt{1 - \frac{\delta^2 \omega^2}{4}}. 
\]
If $\delta^2 \omega^2 \leq 4$, we can set $\lambda^\pm = e^{\pm \im \varphi}$,
where $\varphi \in [0,\pi]$ is uniquely defined by $\cos(\varphi) = 1 - \frac{\delta^2 \omega^2}{2}$ and $\sin(\varphi) = \delta \omega \sqrt{1 - \frac{\delta^2 \omega^2}{4}}$.
We can use $\varphi$ to rewrite the propagator as a rotation with angle $\varphi$
\[
\begin{bmatrix} \cos(\varphi) & \frac{1}{\hat\omega} \sin(\varphi) \\
- \hat\omega \sin(\varphi) & \cos(\varphi) \end{bmatrix}, \qquad \text{ where } \hat\omega
= \omega \sqrt{1 - \frac{\delta^2\omega^2}{4}}.
\]
Comparing this with \eqref{eq:harm-osc}, we see that the leapfrog trajectory \emph{exactly} follows the Hamiltonian dynamics for the modified Hamiltonian $\hat \cH$ given by 
\begin{equation*} 
\hat \cH(x,v)
= \frac{1}{2} \hat\omega^2 x^2 + \frac{1}{2} v^2. 
\end{equation*}
Indeed, if $(\hat x(t), \hat v(t))$ is the solution of Hamilton's equations with Hamiltonian $\hat \cH(x,v)$ and initial conditions $(\hat x(0) = x_0,\hat v(0) = v_0)$, then the $n$th point on the leapfrog trajectory equals
\begin{align*} 
\begin{bmatrix} \hat x^{(n)} \\ \hat v^{(n)} \end{bmatrix} = \begin{bmatrix} \cos(n \varphi) & \frac{1}{\hat\omega} \sin(n \varphi)  \\
- \hat\omega \sin(n \varphi) & \cos(n \varphi) \end{bmatrix}
 \begin{bmatrix} \hat x_0 \\ \hat v_0 \end{bmatrix} 
&= \begin{bmatrix} \cos(\hat\omega t_n) & \frac{1}{\hat\omega} \sin(\hat\omega t_n) \\
- \hat\omega \sin(\hat\omega t_n) & \cos(\hat\omega t_n) \end{bmatrix}
 \begin{bmatrix} \hat x_0 \\ \hat v_0 \end{bmatrix}
= \begin{bmatrix} \hat x(t_n) \\ \hat v(t_n) \end{bmatrix},
\end{align*}
where $t_n = n \varphi/\hat\omega$. We can now easily check that the difference between $\cH$ and $\hat \cH$ is 
\begin{equation*} 
\cH(x,v) - \hat\cH(x,v) = \frac{\delta^2 \omega^4 x^2}{8}.
\end{equation*}

By our former remark, this observation extends to general $d$-dimensional harmonic oscillators and the corresponding leapfrog integrator \eqref{eq:matrix leapfrog}: we define $\hat B$ by replacing $\omega_i$ by $\hat \omega_i$ for each eigenvalue of $B$, where  
\begin{equation} \label{eq:hat omega}
\hat \omega_i := \omega_i \sqrt{1-\frac{\delta^2 \omega_i^2}{4}},
\end{equation}
and we set $\hat \cH(x,v) = \frac{1}{2} x^\top \hat B x + \frac{1}{2} v^\top v$.
The leapfrog integrator is then an exact integrator for $\hat \cH$ and we have that
\begin{equation} \label{eq: distance H and hat H}
    \cH(x,v) - \hat \cH(x,v) = \frac{\delta^2}{8} \sum_{i \in [d]} \omega_i^4 x_i^2.
\end{equation}

Finally we introduce the following notation: the tuple $(x',v')= \LF(x,v,t,\delta)$ is defined as the (position, momentum)-vector after taking $t/\delta$ leapfrog integration steps for Hamiltonian $\cH$ with stepsize~${0 \leq \delta \leq 1/\sqrt{\beta}}$.\footnote{We will always apply this with $t/\delta \in \N$.}

\section{Idealized and unadjusted HMC} \label{sec:idealized}

We first analyze an idealized version of HMC, \cref{alg:ideal-HMC}, where we assume that we can exactly integrate the Hamiltonian dynamics. We use long and random integration times. In order to later apply the results from this section in the setting of a numerical integrator, we will use uniformly random integration times $t \sim U(\cT)$ from a \emph{finite} set $\cT$.
We will require only that for all $0<\alpha \leq \omega^2 \leq \beta$, we have that $\P_{t \sim U(\cT)}\big[|\cos(\omega t)| \leq 0.9\big] \geq 1/2$.
In the following lemma we show that this is satisfied for a simple choice of $\cT$. 

\begin{lemma} \label{lem: discrete T proof}
Let $0<\sqrt{\alpha} \leq \sqrt{\beta}$.
If $0<\delta \leq \pi/(20 \sqrt{\beta})$ and 
\begin{equation} \label{eq: def cT}
\cT = \{k \cdot \delta \mid k \in \N, \  k\cdot \delta<10\pi/\sqrt{\alpha}\}
\end{equation}
then we have for all $\omega \in [\sqrt{\alpha},\sqrt{\beta}]$ that
\begin{equation} \label{lem: discrete T}
\P_{t \sim U(\cT)}\big[|\cos(\omega t)| \leq 0.9\big] \geq 1/2.
\end{equation}
\end{lemma}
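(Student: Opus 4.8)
The plan is to fix $\omega \in [\sqrt\alpha,\sqrt\beta]$ and work with the rescaled step $\gamma := \omega\delta$, so that the quantities we must control are exactly the terms of the arithmetic progression $\theta_k = k\gamma$ as $t = k\delta$ ranges over $\cT$. Two numerical facts set up the argument. First, since $\omega \le \sqrt\beta$ and $\delta \le \pi/(20\sqrt\beta)$, the step is small: $\gamma \le \pi/20$. Second, writing $K := |\cT|$ for the number of points and $\Theta := K\gamma$ for the total angular span, the bound $\omega \ge \sqrt\alpha$ together with $K\delta > 10\pi/\sqrt\alpha - \delta$ gives $\Theta \ge 10\pi - \sqrt\alpha\,\delta \ge 10\pi - \pi/20 = 9.95\pi$; that is, the progression sweeps out roughly ten half-periods of $|\cos|$.

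Next I would describe the \emph{bad} event geometrically. Writing $a := \arccos(0.9) \approx 0.451$, one has $|\cos\theta| > 0.9$ exactly when $\dist(\theta,\pi\mathbb{Z}) < a$, i.e.\ when $\theta$ lies in one of the intervals $(\ell\pi - a, \ell\pi + a)$ of length $2a$ centred at the integer multiples of $\pi$. I would then bound the number of bad points by a counting estimate: the number of such intervals meeting the range $(0,\Theta]$ is at most $\Theta/\pi + 2$, and each interval, having length $2a$, contains at most $\lfloor 2a/\gamma\rfloor + 1 \le 2a/\gamma + 1$ terms of a progression of step $\gamma$. Hence the number $|B|$ of bad indices satisfies $|B| \le (\Theta/\pi + 2)(2a/\gamma + 1)$.

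Finally I would divide by $K = \Theta/\gamma$ and simplify to
\[
\frac{|B|}{K} \;\le\; \frac{2a+\gamma}{\pi} + \frac{2(2a+\gamma)}{\Theta}.
\]
Plugging in $2a + \gamma \le 2\arccos(0.9) + \pi/20 \approx 1.06$ and $\Theta \ge 9.95\pi$ bounds the first term by about $0.337$ and the second by about $0.068$, for a total below $0.41 < 1/2$. Therefore the good fraction is at least $1 - 0.41 > 1/2$, which is exactly \eqref{lem: discrete T}. If one takes $0 \in \N$ so that $t = 0 \in \cT$, this merely adds one bad point, changing numerator and denominator each by at most one and leaving the strict inequality intact.

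The main obstacle is purely quantitative: making the two error sources cancel against the $0.713$ fraction of each period on which $|\cos| \le 0.9$. The discretization term $(2a+\gamma)/\pi$ is controlled by the smallness of the step ($\gamma \le \pi/20$), while the boundary term $2(2a+\gamma)/\Theta$ is controlled by the length of the integration window ($\Theta \gtrsim 10\pi$); both the choice of $\delta$ and the cutoff $10\pi/\sqrt\alpha$ in the definition of $\cT$ are calibrated precisely so that the sum stays safely below $1/2$. Care is only needed to ensure the interval-counting bounds hold uniformly in $\omega$, which the floor estimates $\lfloor 2a/\gamma\rfloor + 1$ and $\Theta/\pi + 2$ guarantee.
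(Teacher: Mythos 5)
Your argument is correct, and it takes a genuinely different route from the paper. The paper first proves a helper claim for arithmetic progressions confined to a single quarter-period $[0,\tfrac{\pi}{2\omega}]$ (at least a $3/5$ fraction of any such progression with $\geq 10$ points lands where $|\cos(\omega t)|\leq 0.9$), then tiles $\cT$ by $N=\lfloor 20\omega/\sqrt{\alpha}\rfloor\geq 20$ quarter-periods, checks that each of the first $N-1$ tiles contains at least $10$ points, and absorbs the last partial tile into a $\tfrac{N-1}{N}\geq\tfrac{19}{20}$ loss, giving $\tfrac{19}{20}\cdot\tfrac35\geq\tfrac12$. You instead make one global covering estimate: the bad set $\{\theta:|\cos\theta|>0.9\}$ is the union of intervals of length $2a$, $a=\arccos(0.9)$, centred at $\pi\mathbb{Z}$, and you count how many terms of the progression $k\gamma$, $\gamma=\omega\delta\leq\pi/20$, can fall in the at most $\Theta/\pi+2$ such intervals meeting the sweep $(0,\Theta]$ with $\Theta\geq 9.95\pi$; this yields a bad fraction at most $\tfrac{2a+\gamma}{\pi}+\tfrac{2(2a+\gamma)}{\Theta}\approx 0.41<\tfrac12$. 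Both proofs exploit the same two calibrations (step $\leq\pi/20$ per unit of $\omega$, window $\gtrsim 10$ periods), but your version avoids the per-tile lemma and the bookkeeping about which tiles have enough points, handles the boundary via a clean additive $+2$, and even gives a slightly better constant ($\approx 0.595$ versus the paper's $0.57$). One cosmetic slip: in bounding $\Theta$ you write $10\pi-\sqrt{\alpha}\,\delta$ where the correct intermediate quantity is $10\pi-\omega\delta$; since $\omega\delta=\gamma\leq\pi/20$ the final bound $\Theta\geq 9.95\pi$ stands, so nothing breaks.
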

\begin{proof}
First, we prove that if $\zeta>\eta\geq 0$, $\omega > 0$, and $\widetilde \cT = \{\eta + n \zeta: n \in \N, \eta + n \zeta \leq \frac{\pi}{2\omega}\}$ with $|\widetilde \cT|\geq 10$, then for $t$ chosen uniformly from $\widetilde \cT$ we have 
    \begin{equation} \label{eq: prob for tilde T}
        \P_{t \sim U(\widetilde \cT)} \left\{ |\cos(\omega t)| \leq 0.9 \right\} \geq 3/5.
    \end{equation}
To see this, note that $\zeta \leq \left\lfloor \frac{\pi}{2 \omega (|\widetilde \cT| - 1)} \right\rfloor$ implies that
    \begin{align*}
        \P_{t \sim U(\widetilde \cT)} \left\{ |\cos(\omega t)| \leq 0.9 \right\} &= \P_{t \sim U(\widetilde\cT)} \left\{ t \geq \frac{1}{\omega} \arccos(0.9) \right\} \\
        &\geq \frac{1}{|\widetilde\cT|}  \left\lfloor \frac{\pi/2 - \arccos(0.9)}{\omega \zeta} \right\rfloor \\
        &\geq \frac{1}{|\widetilde\cT|} \left\lfloor \left(1 - 2\arccos(0.9)/\pi\right) (|\widetilde\cT| - 1) \right\rfloor.
    \end{align*}
    The last quantity is at least $3/5$ for $|\widetilde\cT| \geq 10$.

We now make use of the above to show the desired bound for the set $\cT$ defined in \cref{eq: def cT}. Let $\omega$ be such that $\sqrt{\alpha} \leq \omega \leq \sqrt{\beta}$. Note that $|\cos(\omega t)|$ is periodic with period $\frac{\pi}{2\omega}$. We write $\cT$ as the disjoint union 
 \[
 \cT = \bigcup_{n=1}^N \left(\cT \cap \big[\frac{(n-1)\pi}{2\omega},\frac{n\pi}{2\omega}\big]\right)
 \]
 where $N$ is the least integer such that $\frac{N\pi}{2\omega}>\frac{10\pi}{\sqrt{\alpha}}$, i.e., $N = \left\lfloor\frac{20\omega}{\sqrt{\alpha}}\right\rfloor$. Note that $N \geq 20$. Since $\delta \leq \frac{\pi}{20 \sqrt{\beta}}$ and $\omega \leq \sqrt{\beta}$, the first $N-1$ such intervals contain at least 
 \[
 \left \lfloor \frac{\pi}{2 \omega \delta}\right \rfloor
 \geq 10
 \]
 equally spaced points.
Now note that the subset $\cT \cap [\frac{(n-1)\pi}{2\omega},\frac{n\pi}{2\omega}])$ takes precisely the form as considered at the start of the proof, and we just proved that $|\cT \cap [\frac{(n-1)\pi}{2\omega},\frac{n\pi}{2\omega}])| \geq 10$.
Hence, \cref{eq: prob for tilde T} shows that for each of these $N-1$ intervals we have 
 \[
 \P_{t \sim U(\cT \cap [\frac{(n-1)\pi}{2\omega},\frac{n\pi}{2\omega}])} \big[|\cos(\omega t)| \leq 0.9\big] \geq \frac{3}{5}.
 \]
 Given that there are $N \geq 20$ intervals in total, we get
 \[
\P_{t \sim U(\cT)}\big[|\cos(\omega t)| \leq 0.9\big]
\geq \frac{N-1}{N} \frac{3}{5}
\geq \frac{19}{20} \frac{3}{5} \geq \frac{1}{2}. \qedhere
\]
\end{proof}

We now formulate the HMC algorithm using this definition of $\cT$.

\begin{algorithm}[hbt!]
\caption{\textbf{Markov kernel $P$} (idealized HMC with random integration time)}  \label{alg:ideal-HMC}
\Input{$x\in \R^d$, $\cT$ as in \cref{eq: def cT}} 
\Output{$x' \in \R^d$}
Draw $v \sim \cN(0,I_d)$ and $t \sim U(\cT)$\;
Define $x'$ by following Hamiltonian dynamics for $\cH$ for time $t$, starting from $(x,v)$\; 
\end{algorithm}

It is well known that idealized HMC with a fixed integration time has the desired stationary distribution $\pi$ whose density at $(x,v)$ is related to the Hamiltonian $\cH(x,v) = \frac{1}{2} x^\top \diag(\bm \omega) x + \frac{1}{2} v^\top v$, i.e., $\pi(x,v) \propto {\exp(-\cH(x,v))}$ (cf.~\cite{Duane1987HybridMonteCarlo,Neal1996BayesianLearningNeural,Vishnoi2021IntroductionHamiltonianMonte}). From this it follows that also $P$ has stationary distribution~$\pi$. In \cref{sec:ideal-HMC-MT} we show that $P$ has a small mixing time. We then extend this result to the setting where we use a numerical integrator (leapfrog) instead of the idealized time evolution according to Hamiltonian dynamics. For this we use the fact (cf.~\cref{sec:leapfrog}) that the leapfrog integrator applied to $\cH(x,v)$ can be viewed as an exact integrator for the Hamiltonian dynamics of a modified Hamiltonian $\hat \cH(x,v)$. By bounding the distance between $\pi$ and $\hat \pi \propto \exp(-\hat \cH(x,v))$, we output a distribution that is $\eps$-close to $\pi$ in total variation distance using a number of gradient evaluations that scales as $\tO(\sqrt{\kappa} d^{1/4}/\sqrt{\eps})$, see \cref{sec:unadjusted}.

\subsection{Idealized HMC} \label{sec:ideal-HMC-MT}

Let $P^t_x$ denote the density function of the proposal distribution from $x \in \R^d$, conditioned on having picked $t \in [0,T]$. Using the explicit expression \cref{eq:harm-osc}, we can expand it as
\begin{align} \notag
    P_x^t(z) &= \mathbb{P}_{v \sim \cN(0,1)}\left[\cos(\omega_i t) x_i + \frac{1}{\omega_i} \sin(\omega_i t) v_i = z_i \quad \forall i \in [d]\right] \\
    &= (2\pi)^{-d/2}\prod_{i \in [d]} \frac{\omega_i}{|\sin(\omega_i t)|} \exp\left(-\frac{1}{2}\left(\frac{z_i-\cos(\omega_i t) x_i}{\frac{1}{\omega_i} \sin(\omega_i t)}\right)^2\right). \label{def: Pxt}
\end{align}
The probability density with which idealized HMC moves from $x$ to $z$ is then given by $P_x(z) = \frac{1}{|\cT|} \sum_{t \in \cT} P_x^t(z)$.

We analyze the convergence in total variation distance by explicitly writing out the distribution $P^K$ obtained by taking $K$ steps of the idealized HMC method. If we condition on the choice of random integration times in step 2 of \cref{alg:ideal-HMC}, then the resulting distribution is again a normal distribution. Indeed, let $(v^{(1)},\dots,v^{(K)})$, $(t_1,\dots,t_K)$ and $(x^{(1)},\dots,x^{(K)})$ denote the velocities, integration times and positions, respectively, encountered during the first $K$ steps.
By repeatedly applying \eqref{def: Pxt}, we can express
\begin{align*}
x^{(K)}_i
&= x^{(K-1)}_i \cos(\omega_i t_K) + \frac{1}{\omega_i} \sin(\omega_i t_K) v^{(K)} \\
&= x^{(0)}_i \left(\prod_{k=1}^K \cos(\omega_i t_k)\right) +\frac{1}{\omega_i}\sum_{k=1}^K v^{(k)} \sin(\omega_i t_k) \left(\prod_{j=k+1}^K \cos(\omega_i t_j)\right).
\end{align*}
For a fixed tuple $\bm t = (t_1,\dots,t_K) \in \cT^K$ of integration times, but \emph{random} choices $(v^{(1)},\dots,v^{(K)}) \sim \cN(0,I_d)^K$ of momenta, we can argue that this describes a Gaussian distribution, which we denote by $P_x^{\bm t}$. First, note that~$P_x^{\bm t}$ is a product distribution: $P_x^{\bm t}(z) = \prod_{i \in [d]} P_x^{\bm t,i}(z_i)$ where we use~$P_x^{\bm t,i}$ for the marginal distribution of~$P_x^{\bm t}$ with respect to the $i$-th coordinate. 
Then, note that $P_x^{\bm t,i}$ describes a sum of Gaussians with the same mean, and hence forms again a Gaussian.
We formalize this in the next lemma.

\begin{lemma}
Let $\bm t \in \cT^K$, $\omega>0$, $x \in \R$, and consider 
\[
z = x \left(\prod_{k=1}^K \cos(\omega t_k)\right) +\frac{1}{\omega}\sum_{k=1}^K v^{(k)} \sin(\omega t_k) \left(\prod_{j=k+1}^K \cos(\omega t_j)\right)
\]
where $v^{(k)} \sim \cN(0,1)$ for each $k \in [K]$. Then $z \sim \cN(x \prod_{k=1}^K \cos(\omega t_k), \frac{1}{\omega^2}(1-\prod_{k=1}^K \cos(\omega t_j)^2))$. 
\end{lemma}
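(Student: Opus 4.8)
The plan is to exploit that $z$ is an affine function of the independent standard Gaussians $v^{(1)}, \dots, v^{(K)}$, so it is automatically Gaussian, and then it only remains to identify its mean and variance. Concretely, I would write $z = a + \sum_{k=1}^K b_k v^{(k)}$ with the deterministic coefficients $a = x \prod_{k=1}^K \cos(\omega t_k)$ and $b_k = \frac{1}{\omega} \sin(\omega t_k) \prod_{j=k+1}^K \cos(\omega t_j)$. Since any finite linear combination of independent Gaussians is again Gaussian, we immediately get $z \sim \cN(\Exp[z], \Var(z))$, and the task reduces to computing these two moments.

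For the mean I would use linearity of expectation together with $\Exp[v^{(k)}] = 0$, so every stochastic term drops out and $\Exp[z] = a = x \prod_{k=1}^K \cos(\omega t_k)$, matching the claim. For the variance, the independence of the $v^{(k)}$ (each of unit variance) gives $\Var(z) = \sum_{k=1}^K b_k^2 = \frac{1}{\omega^2} \sum_{k=1}^K \sin^2(\omega t_k) \prod_{j=k+1}^K \cos^2(\omega t_j)$.

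The only genuine step remaining — and the one I expect to be the crux, although it is elementary — is to show that this sum telescopes into the claimed closed form, i.e.,
\[
\sum_{k=1}^K \sin^2(\omega t_k) \prod_{j=k+1}^K \cos^2(\omega t_j) = 1 - \prod_{k=1}^K \cos^2(\omega t_k).
\]
Setting $c_k := \cos^2(\omega t_k)$ and using $\sin^2(\omega t_k) = 1 - c_k$, each summand becomes $\prod_{j=k+1}^K c_j - \prod_{j=k}^K c_j$. Writing $P_k := \prod_{j=k}^K c_j$ with the empty-product convention $P_{K+1} = 1$, the left-hand side is $\sum_{k=1}^K (P_{k+1} - P_k) = P_{K+1} - P_1 = 1 - \prod_{j=1}^K c_j$, which is exactly the right-hand side. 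Substituting this back and multiplying by $1/\omega^2$ yields $\Var(z) = \frac{1}{\omega^2}\bigl(1 - \prod_{k=1}^K \cos^2(\omega t_k)\bigr)$, completing the identification of both parameters and hence the proof. No step here poses a real obstacle; the telescoping identity is the only part that is not purely mechanical, and it resolves cleanly once the summands are rewritten as consecutive differences of partial products.
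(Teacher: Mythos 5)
Your proof is correct and follows essentially the same route as the paper's: recognize $z$ as an affine combination of independent Gaussians, read off the mean, sum the squared coefficients for the variance, and telescope using $\sin^2 = 1 - \cos^2$. You simply spell out the telescoping step a bit more explicitly than the paper does.
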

\begin{proof}
It is clear that $\Exp[z] = x \prod_{k=1}^K \cos(\omega t_k)$. The sum of Gaussian random variables is again distributed according to a Gaussian whose variance is the sum of the individual variances. That is, 
\begin{align*}
\Exp[(z-\Exp[z])^2] &= \frac{1}{\omega^2}\sum_{k=1}^K \sin(\omega t_k)^2 \left(\prod_{j=k+1}^K \cos(\omega t_j)^2\right)\\
&= \frac{1}{\omega^2}\sum_{k=1}^K (1-\cos(\omega t_k)^2) \left(\prod_{j=k+1}^K \cos(\omega t_j)^2\right) \\
&= \frac{1- \prod_{j=1}^K \cos(\omega t_j)^2}{\omega^2}. \qedhere
\end{align*}
\end{proof}

If the term $\prod_{k=1}^K \cos(\omega t_k)$ is sufficiently small, then $P^{\bm t}_x$ is close to $\pi$. \cref{lem: discrete T proof} and Hoeffding's inequality show that for a random tuple $\bm t = (t_1,\dots,t_K) \sim U(\cT^K)$ this term will indeed be small.
Then we use this to prove convergence of the proposal distribution to $\pi$.

\begin{lemma} \label{lem: exponential decrease}
Let $0<\alpha \leq \omega^2 \leq \beta$ and $\cT$ as in \cref{lem: discrete T}. Then
\[
\P_{\bm t \sim U(\cT^K)}\left[\Big|\prod_{k=1}^K \cos(\omega t_k)\Big| \geq 0.9^{K/4}\right] \leq \exp(-K/8).
\]
\end{lemma}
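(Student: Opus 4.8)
The plan is to reduce the product to a count of ``good'' factors and then apply a concentration bound. Define for each $k \in [K]$ the indicator $X_k = \1[|\cos(\omega t_k)| \leq 0.9]$. Since $\bm t \sim U(\cT^K)$, the $X_k$ are i.i.d.\ Bernoulli random variables, and by \cref{lem: discrete T} their common success probability satisfies $p := \P[X_k = 1] \geq 1/2$.

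Next I would control the product by the number of good factors. Since every factor satisfies $|\cos(\omega t_k)| \leq 1$, and each good factor (with $X_k = 1$) additionally satisfies $|\cos(\omega t_k)| \leq 0.9$, we get
\[
\Big|\prod_{k=1}^K \cos(\omega t_k)\Big| = \prod_{k=1}^K |\cos(\omega t_k)| \leq 0.9^{\,S}, \qquad \text{where } S := \sum_{k=1}^K X_k.
\]
Because $0.9 < 1$, the map $s \mapsto 0.9^s$ is strictly decreasing, so the event $\{\,|\prod_k \cos(\omega t_k)| \geq 0.9^{K/4}\,\}$ forces $0.9^{S} \geq 0.9^{K/4}$, i.e.\ $S \leq K/4$. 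Thus it suffices to bound $\P[S \leq K/4]$.

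Finally I would apply Hoeffding's inequality. Since $\Exp[S] = pK \geq K/2$, the event $\{S \leq K/4\}$ is contained in $\{S - \Exp[S] \leq -K/4\}$. Hoeffding's inequality for a sum of $K$ independent $[0,1]$-valued random variables then gives
\[
\P[S \leq K/4] \leq \P\big[S - \Exp[S] \leq -K/4\big] \leq \exp\big(-2(K/4)^2/K\big) = \exp(-K/8),
\]
which is the claimed bound.

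The steps here are all routine; the only point requiring care is the direction of the inequalities in the reduction step: because $0.9 < 1$, a large product corresponds to \emph{few} good factors, so the tail we must control is the \emph{lower} tail of $S$, and the choice of exponent $K/4$ (against the mean $\geq K/2$) is precisely what makes the deviation $K/4$ feed through Hoeffding to the target $\exp(-K/8)$.
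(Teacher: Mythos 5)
Your proof is correct and follows essentially the same route as the paper: introduce the indicators $X_k = \1[|\cos(\omega t_k)|\le 0.9]$, use \cref{lem: discrete T proof} to get success probability at least $1/2$, observe that the bad event forces $\sum_k X_k \le K/4$, and apply a concentration bound to that sum. The only (immaterial) difference is that you invoke Hoeffding's inequality where the paper uses a multiplicative Chernoff bound, and your calculation $\exp(-2(K/4)^2/K)=\exp(-K/8)$ in fact lands on the stated constant cleanly.
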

\begin{proof}
Let $\bm t = (t_1,\dots,t_K)$ with $t_k \sim U(\cal T)$, and define the i.i.d.~Boolean variables $X_k$ as indicating whether $|\cos(\omega t_k)| \leq 0.9$.
Define $\rho = \P[X_k = 1]$.
By \cref{lem: discrete T proof} we know that $\rho \geq 1/2$.
By the multiplicative Chernoff bound this implies that
\[
\P\left[ \sum_{k=1}^K X_k \leq \frac{K}{4} \right]
\leq \P\left[ \sum_{k=1}^K X_k \leq \frac{K \rho}{2} \right]
\leq \exp\left(-\frac{K}{8}\right).
\]
It remains to note that if $\sum_{k=1}^K X_k > K/4$ then $\Big|\prod_{k=1}^K \cos(\omega t_k)\Big| < 0.9^{K/4}$, and this implies that
\[
\P_{\bm t \sim U(\cT^K)}\left[\Big|\prod_{k=1}^K \cos(\omega t_k)\Big| \geq 0.9^{K/4}\right]
\leq \P\left[ \sum_{k=1}^K X_k \leq \frac{K}{4} \right]. \qedhere
\]
\end{proof}

Using the above lemma, we show that the proposal distributions $P_x^K(z) := P^K(x,z)$ and $P_y^K := P^K(y,z)$ are close provided that $x$ and $y$ are close.  
\begin{proposition} \label{prop: tv bound on K step HMC}
For every $x,y \in \R^d$, if 
\[
K \geq 38 \log\left(\frac{d(2+\sqrt{\beta}\|x-y\|_\infty)}{\eps} \right),
\]
then, with $P$ the kernel of idealized HMC, we have 
\[
\|P_x^K -  P_y^K\|_{\TV} \leq \eps.
\]
\end{proposition}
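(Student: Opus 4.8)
The plan is to exploit that the $K$-step proposal $P_x^K$ is a \emph{uniform mixture} over integration-time tuples $\bm t \in \cT^K$ of the product Gaussians $P_x^{\bm t}$ from the preceding lemma, namely $P_x^K = \frac{1}{|\cT|^K}\sum_{\bm t \in \cT^K} P_x^{\bm t}$, and likewise for $y$ with the \emph{same} mixing weights. The crucial observation is that the $i$-th coordinate of $P_x^{\bm t}$ is $\cN\big(p_i(\bm t)\, x_i,\ \sigma_i(\bm t)^2\big)$ with $p_i(\bm t) = \prod_{k} \cos(\omega_i t_k)$ and $\sigma_i(\bm t)^2 = \omega_i^{-2}(1 - p_i(\bm t)^2)$; in particular the covariance depends only on $\bm t$ and not on the starting point, so $P_x^{\bm t}$ and $P_y^{\bm t}$ differ only in their means. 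By joint convexity of total variation distance (i.e.\ $\|\sum_{\bm t} w_{\bm t}\mu_{\bm t} - \sum_{\bm t} w_{\bm t}\nu_{\bm t}\|_{\TV} \le \sum_{\bm t} w_{\bm t}\|\mu_{\bm t} - \nu_{\bm t}\|_{\TV}$) it therefore suffices to bound $\Exp_{\bm t \sim U(\cT^K)} \|P_x^{\bm t} - P_y^{\bm t}\|_{\TV}$.

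Next I would split this expectation according to whether $\bm t$ is \emph{good}, meaning $|p_i(\bm t)| \le 0.9^{K/4}$ for all $i \in [d]$, or bad. By \cref{lem: exponential decrease} applied coordinatewise together with a union bound, $\P_{\bm t}[\bm t \text{ bad}] \le d\exp(-K/8)$, and on this rare event I bound $\|P_x^{\bm t} - P_y^{\bm t}\|_{\TV} \le 1$ trivially. For a good $\bm t$ the two Gaussians share a covariance, so Pinsker's inequality combined with the closed-form KL divergence between equal-covariance Gaussians gives
\[
\|P_x^{\bm t} - P_y^{\bm t}\|_{\TV} \le \tfrac12\sqrt{\sum_{i \in [d]} \frac{p_i(\bm t)^2 (x_i - y_i)^2}{\sigma_i(\bm t)^2}} = \tfrac12\sqrt{\sum_{i\in[d]} \frac{\omega_i^2\, p_i(\bm t)^2 (x_i-y_i)^2}{1-p_i(\bm t)^2}}.
\]
Using $p_i(\bm t)^2 \le 0.9^{K/2}$ (so that also $1 - p_i(\bm t)^2 \ge 1/2$ once $K$ exceeds a small absolute constant), $\omega_i^2 \le \beta$, and $(x_i - y_i)^2 \le \|x-y\|_\infty^2$, the right-hand side is at most $\tfrac12\sqrt{2\beta d}\,\|x-y\|_\infty\, 0.9^{K/4}$.

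Combining the two cases yields
\[
\|P_x^K - P_y^K\|_{\TV} \le d\exp(-K/8) + \tfrac12\sqrt{2\beta d}\,\|x-y\|_\infty\, 0.9^{K/4},
\]
and it remains to check that both terms drop below $\eps/2$ once $K \ge 38\log\!\big(d(2+\sqrt\beta\|x-y\|_\infty)/\eps\big)$. The geometric factor $0.9^{K/4}$ is the bottleneck: the constant $38$ is essentially $4/\log(1/0.9) \approx 37.97$, while the faster-decaying bad-tuple term $d\exp(-K/8)$ is comfortably absorbed by the same bound. I expect the only genuinely delicate points to be the equal-covariance Gaussian TV estimate and tracking constants so that the single logarithm in the statement simultaneously controls $2d/\eps$ (bad tuples) and $\sqrt{2\beta d}\,\|x-y\|_\infty/\eps$ (good tuples); everything else is bookkeeping on top of \cref{lem: exponential decrease}.
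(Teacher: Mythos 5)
Your proposal is correct and follows essentially the same route as the paper: write $P_x^K$ as a uniform mixture of the product Gaussians $P_x^{\bm t}$, use \cref{lem: exponential decrease} to discard the tuples $\bm t$ with a large cosine product, and bound the total variation distance between the remaining pairs of equal-covariance Gaussians by their (contracted) mean shift. The only differences are cosmetic --- the paper applies a second triangle inequality over coordinates and uses the univariate estimate $\|\cN(\mu_1,\sigma^2)-\cN(\mu_2,\sigma^2)\|_{\TV} \le |\mu_1-\mu_2|/\sigma$ with a per-coordinate bad event, whereas you keep the multivariate Gaussian, apply Pinsker, and take a union bound --- plus one bookkeeping quibble: in the worst case ($d=1$ and $\sqrt{\beta}\|x-y\|_\infty$ large) your second term is only bounded by $\eps/\sqrt{2}$ rather than $\eps/2$, but your displayed bound still yields the claim because the bad-tuple term is far below $\eps/2$.
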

\begin{proof}
Recall that $P_x^K = \frac{1}{|\cT|^K} \sum_{\bm t \in \cT^K} P_x^{\bm t}$ and $P_x^{\bm t} = \prod_{i \in [d]} P_x^{\bm t,i}$ is a product distribution.
Hence, we can twice apply a triangle inequality to obtain 
\begin{align} \notag
    \|P_x^K -  P_y^K\|_{\TV} &\leq \frac{1}{|\cT|^K} \sum_{\bm t \in \cT^K} \|P_x^{\bm t} - P_y^{\bm t}\|_{\TV}  \\
    &\leq \sum_{i \in [d]} \frac{1}{|\cT|^K} \sum_{\bm t \in \cT^K} \|P_x^{\bm t,i} - P_y^{\bm t,i}\|_{\TV} \label{eq:triangle ineq on TV}
\end{align}
Now let $\delta= \frac{1}{\sqrt{2}}\min\big\{1,\frac{\eps}{2d(2+\sqrt{\beta}\|x-y\|_{\infty})}\big\}$ and $K \geq 38 \log(1/\delta)$.
We will invoke \cref{lem: exponential decrease}.
By our choice of $K$ we have that $0.9^{K/4} \leq \delta$ and $\exp(-K/8) \leq \eps/(2d)$, and so the lemma ensures that
\[
\P_{\bm t \sim U(\cT^K)}\left[\Big|\prod_{k=1}^K \cos(\omega_i t_k)\Big| \geq \delta \right] \leq \frac{\eps}{2d}
\]
for each $i \in [d]$. Hence for each coordinate $i \in [d]$ we have 
\begin{align} \notag
   \frac{1}{|\cT|^K} \sum_{\bm t \in \cT^K} \|P_x^{\bm t,i} - P_y^{\bm t,i}\|_{\TV} &\leq \frac{\eps}{2d} + \frac{1}{|\cT|^K} \sum_{\bm t \in \cT^K: |\prod_{k=1}^K \cos(\omega_i t_k)| \leq \delta} \|P_x^{\bm t,i} - P_y^{\bm t,i}\|_{\TV} \\
  &\leq \frac{\eps}{2d} + (1-\frac{\eps}{2d})
  |x_i - y_i| \delta \sqrt{2}\omega_i
  \label{eq:per coordinate TV bound}
  \leq \eps,
\end{align}
where we use that for $\bm t \in \cT^K$ for which $\big|\prod_{k=1}^K \cos(\omega_i t_j)\big| \leq \delta \leq\frac{1}{\sqrt{2}}$, the proposal distributions $P_x^{\bm t,i}$ and $P_y^{\bm t,i}$ are univariate Gaussians with means $\mu_x, \mu_y$ that satisfy $|\mu_x - \mu_y| \leq \delta|x_i - y_i|$, and both have variance $\sigma^2 \geq \frac{1-\delta^2}{\omega_i^2} \geq \frac{1}{2\omega_i^2}$.  (For univariate Gaussians one has $\|\cN(\mu_,\sigma^2) - \cN(\mu_2,\sigma^2)\|_{\TV}<|\mu_1-\mu_2|/\sigma$.)  
Combining \cref{eq:triangle ineq on TV,eq:per coordinate TV bound} we obtain $\|P_x^K-P_y^K\|_{\TV} \leq \eps$. 
\end{proof}

This bound then easily leads to a bound on the total variation distance between $P_x^K$ and $\pi$ for $x$ that is sufficiently close to $0$, and this is the main conclusion of this section.
\begin{theorem}[Idealized HMC] \label{cor:idealized-mixing}
There exists a constant $C>0$ such that for every $x \in \R^d$, if 
\[
K \geq C\log\left(\frac{d\kappa(\sqrt{\alpha}\norm{x}_\infty + 1)}{\eps}\right),
\]
then, with $\pi \propto \exp(-\frac{1}{2} x^\top Bx)$ and $P$ the kernel of idealized HMC, we have 
\[
\|P_x^K - \pi\|_{\TV} \leq \eps.
\]
\end{theorem}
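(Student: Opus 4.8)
The plan is to combine the stationarity of $\pi$ with \cref{prop: tv bound on K step HMC}. Since $\pi$ is stationary for $P$, we have $\pi = \pi P^K$, that is $\pi(z) = \int_{\R^d} \pi(y) P_y^K(z)\,\dif y$. Because $P_x^K = \int \pi(y) P_x^K \,\dif y$ as well, the triangle inequality (convexity of the $\TV$ norm) gives
\[
\|P_x^K - \pi\|_{\TV} = \Big\| \int \pi(y)\big(P_x^K - P_y^K\big) \,\dif y\Big\|_{\TV} \leq \int \pi(y) \|P_x^K - P_y^K\|_{\TV}\,\dif y = \Exp_{y\sim\pi}\big[\|P_x^K - P_y^K\|_{\TV}\big].
\]
It then suffices to show this expectation is at most $\eps$, and \cref{prop: tv bound on K step HMC} already bounds each integrand in terms of $\|x-y\|_\infty$.

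To convert the pointwise bound into a bound on the expectation, I would split the integral according to whether $\|y\|_\infty$ is small. Under $\pi$ each coordinate satisfies $y_i \sim \cN(0,1/\omega_i^2)$ with $\omega_i^2 \geq \alpha$, so a Gaussian tail bound together with a union bound over the $d$ coordinates gives $\P_{y\sim\pi}[\|y\|_\infty > t] \leq 2d\exp(-\alpha t^2/2)$. Choosing $t = \Theta\big(\sqrt{\log(d/\eps)/\alpha}\big)$ makes this at most $\eps/2$. On the good event $\{\|y\|_\infty \leq t\}$ I bound $\|x-y\|_\infty \leq \|x\|_\infty + t$ and apply \cref{prop: tv bound on K step HMC} with error parameter $\eps/2$; on the bad event I use the trivial bound $\|P_x^K - P_y^K\|_{\TV} \leq 1$, whose contribution is at most the probability $\eps/2$ of that event. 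Summing the two pieces yields $\Exp_{y\sim\pi}[\|P_x^K-P_y^K\|_{\TV}] \leq \eps$.

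It remains to check that the requirement on $K$ matches the claimed form. On the good event \cref{prop: tv bound on K step HMC} asks for $K \gtrsim \log\!\big(\tfrac{d(2+\sqrt{\beta}(\|x\|_\infty+t))}{\eps}\big)$. Writing $\sqrt{\beta} = \sqrt{\kappa}\sqrt{\alpha}$ turns $\sqrt{\beta}\|x\|_\infty$ into $\sqrt{\kappa}\sqrt{\alpha}\|x\|_\infty \leq \kappa\sqrt{\alpha}\|x\|_\infty$, and $\sqrt{\beta}\,t$ into $\Theta\big(\sqrt{\kappa\log(d/\eps)}\big) \leq \kappa\log(d/\eps)$. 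Hence the argument of the logarithm is at most a fixed polynomial in $\tfrac{d\kappa(\sqrt{\alpha}\|x\|_\infty+1)}{\eps}$, where the extra $\log(d/\eps)$ factor contributes only an additive $\log\log$ term after taking the outer logarithm. Absorbing this into the constant $C$ recovers the stated threshold $K \geq C\log\big(\tfrac{d\kappa(\sqrt{\alpha}\|x\|_\infty+1)}{\eps}\big)$.

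The only delicate point is the bookkeeping in the last step: one must verify that the typical magnitude $t \approx \sqrt{\log(d/\eps)/\alpha}$ of a $\pi$-distributed point, once multiplied by $\sqrt{\beta}$, produces exactly the $\sqrt{\kappa}$-type dependence, and that all lower-order factors (the $\log\log$ term, the numerical constants, and the splitting of $\eps$ into halves) collapse into the single constant $C$. Everything else is a routine combination of stationarity, convexity of $\TV$, a Gaussian tail estimate, and \cref{prop: tv bound on K step HMC}, with no substantive new obstacle.
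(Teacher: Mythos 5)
Your proposal is correct and follows the same skeleton as the paper's proof: write $\pi = \int P_y^K\,\dif\pi(y)$ by stationarity, use convexity of the total variation norm to reduce to bounding $\Exp_{y\sim\pi}\|P_x^K-P_y^K\|_{\TV}$, discard a $\pi$-tail event of mass $\eps/2$, and apply \cref{prop: tv bound on K step HMC} with error $\eps/2$ on the complementary high-probability set. The only place you diverge is the tail estimate: the paper reuses the Hanson--Wright concentration set $E_\gamma$ from \cref{lem:concentration Egamma} with $\gamma=\Theta(\log(1/\eps))$, which controls the Euclidean norm and gives $\|y\|\leq\sqrt{(\gamma+1)\kappa d/\alpha}$, whereas you use a per-coordinate Gaussian tail bound plus a union bound to control $\|y\|_\infty$ directly, getting $\|y\|_\infty\leq\Theta(\sqrt{\log(d/\eps)/\alpha})$. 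Your route is more elementary and in fact yields a slightly smaller argument inside the logarithm (since \cref{prop: tv bound on K step HMC} only needs $\|x-y\|_\infty$); the paper's choice is presumably motivated by reusing machinery it needs anyway in \cref{sec:concentration}. Either way the discrepancy is swallowed by the outer logarithm and the constant $C$, so both give the stated threshold on $K$.
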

\begin{proof}
We write $\pi = \int_{\R^d} \delta_y \dif\pi(y)$.
Using that $\pi$ is stationary for $P$ (and hence $P^K$), we also have that $\pi = \int_{\R^d} P_y^K \dif\pi(y)$.
Now we apply Jensen's inequality:
\begin{align*}
\|P_x^K - \pi\|_{\TV} &\leq \int_{y \in \R^d} \|P_x^K - P_y^K\|_{\TV} \dif\pi(x) \\
&\leq \pi(\{y:\|y\|>\eta \}) + \int_{y \in \R^d: \|y\|\leq \eta} \|P_x^K - P_y^K\|_{\TV} \dif\pi(x). 
\end{align*}
We use \cref{lem:concentration Egamma} to choose an $\eta$ that is sufficiently large to ensure that $\pi(\{y:\|y\|>\eta\}) \leq \eps/2$. In particular, using the notation of that lemma, for $\gamma = \Theta(\log(1/\eps))$ we know that $\pi(E_\gamma) \geq 1 - \eps / 2$, and we can bound the norm of each $y \in E_\gamma$ as
\[
    \alpha^2 \norm{y}^2 \leq y^\top \diag(\vec{\omega})^4 y \leq \sum_i \omega_i^2 + \gamma \sqrt{\sum_i \omega_i^4} \leq d \beta + \gamma \beta \sqrt{d},
\]
which yields the bound $\norm{y} \leq \sqrt{\frac{(\gamma+1) \kappa d}{\alpha}}$ for $y \in E_\gamma$. We use this to bound the quantity $\frac{d \sqrt{\beta}\norm{x-y}_\infty}{\eps}$ as follows
\begin{align*}
    \frac{d \sqrt{\beta}\norm{x-y}_\infty}{\eps} &\leq \frac{d \sqrt{\beta}(\norm{x}_\infty + \sqrt{(\gamma+1) \kappa d / \alpha})}{\eps} \\
    &\leq \frac{d \sqrt{\kappa}(\sqrt{\alpha}\norm{x}_\infty + \sqrt{(\gamma+1) \kappa d})}{\eps} \\
    &\leq \frac{d^{3/2} \kappa \sqrt{(\gamma+1)} (\sqrt{\alpha} \norm{x}_\infty + 1)}{\eps}.
\end{align*}
This shows there exists a $C>0$ such that for $K \geq C\log(\frac{d\kappa(\sqrt{\alpha}\norm{x}_\infty + 1)}{\eps})$ we have $K \geq 38\log\left(\frac{d(2+\sqrt{\beta}\|x-y\|_\infty)}{\eps/2} \right)$ and therefore \cref{prop: tv bound on K step HMC} implies that $\|P_x^K - P_y^K\|_{\TV} \leq \eps/2$ for all $x,y \in \R^d$ with $\|x-y\|_\infty \leq \eta + \|x\|_\infty$. Combining these two bounds shows that $\|P_x^K - \pi\|_{\TV} \leq \eps$.
\end{proof}

\subsection{Unadjusted HMC} \label{sec:unadjusted}

The results from the previous section extend from the idealized setting where one can integrate exactly, to the setting where one uses the leapfrog integrator. 

\begin{algorithm}[H]
\caption{\textbf{Markov kernel $\hat Q$} (leapfrog HMC with random integration time) }
\label{alg:lf-HMC}
\Input{$x \in \R^d$, stepsize $\delta \leq 1/\sqrt{\beta}$, $\cT$ as in \cref{eq: def cT}} 
\Output{$x' \in \R^d$}
Draw $v \sim \cN(0,I_d)$ and move from $x$ to $(x,v)$ \;
Draw $t \sim U(\cT)$ and set $(x',v') = \LF(x,v,t,\delta)$ \;
\end{algorithm}

As discussed in \cref{sec:leapfrog}, the leapfrog dynamics correspond to Hamiltonian dynamics for a slightly modified Hamiltonian $\hat \cH$. Bounding the distance between the stationary distribution $\hat \pi$ and $\pi$ leads to the following $\poly(1/\eps)$-algorithm for sampling from a distribution $\eps$-close to $\pi$. 
\begin{proposition}[Unadjusted HMC] \label{lem:mixing-unadjusted}
There exist constants $C, C' >0$ such that for every $x \in \R^d$, if 
\[
K \geq C\log\left(\frac{d\kappa(\sqrt{\alpha}\norm{x}_\infty + 1)}{\eps}\right)
\quad \text{ and } \quad
\delta \leq C' \frac{\sqrt{\eps}}{\sqrt{\beta} d^{1/4}},
\]
then
\[
\| \hat Q_x^K - \pi\|_{\TV} \leq \eps
\]
where $\pi(x) \propto \exp(-\frac{1}{2} x^\top B x)$ and $\hat Q$ is the kernel of the unadjusted leapfrog HMC chain with step size $\delta$. A sample from $\hat Q_x^K$ can be obtained using $O( \frac{\sqrt{\kappa} d^{1/4}K}{\sqrt{\eps}} )$ gradient evaluations.
\end{proposition}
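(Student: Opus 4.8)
The starting point is the observation from \cref{sec:leapfrog} that the leapfrog map \cref{eq:matrix leapfrog} decouples in the eigenbasis of $B$ and acts, in coordinate $i$, as a rotation by angle $\varphi_i$ (with $\sin\varphi_i=\delta\hat\omega_i$) in the $\hat\omega_i$-metric, where $\hat\omega_i$ is given by \cref{eq:hat omega}. Each leapfrog step therefore preserves both $\hat\cH$ and Lebesgue measure, so $\hat Q$ is reversible with respect to $\hat\pi(x)\propto\exp(-\tfrac12 x^\top\hat B x)$; moreover, after $n=t/\delta$ steps the position update in coordinate $i$ is $x_i\mapsto\cos(n\varphi_i)\,x_i+\tfrac{1}{\hat\omega_i}\sin(n\varphi_i)\,v_i$ with $v_i\sim\cN(0,1)$, which is precisely the Gaussian update analysed in \cref{sec:ideal-HMC-MT} with $\omega_i$ replaced by $\hat\omega_i$ and the phase $\omega_i t$ replaced by $n\varphi_i=(\varphi_i/\delta)\,t$. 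The plan is to split, via the triangle inequality,
\[
\|\hat Q_x^K-\pi\|_{\TV}\;\leq\;\|\hat Q_x^K-\hat\pi\|_{\TV}+\|\hat\pi-\pi\|_{\TV},
\]
and to bound the first summand by $\eps/2$ by taking $K$ large and the second by $\eps/2$ by taking $\delta$ small.

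For the first term I would rerun the analysis of \cref{sec:ideal-HMC-MT} for $\hat Q$. Since $\delta\leq 1/\sqrt\beta$ forces $\delta^2\omega_i^2/4\leq\tfrac14$, the modified frequencies satisfy $\hat\omega_i^2\in[\tfrac{3}{4}\alpha,\beta]$, while the cosine phases are governed by $\varphi_i/\delta=\arcsin(\delta\hat\omega_i)/\delta\in[\hat\omega_i,\tfrac{\pi}{2}\hat\omega_i]$; both ranges are constant-factor dilations of $[\sqrt\alpha,\sqrt\beta]$. Choosing $C'$ small enough that $\delta\leq 1/(10\sqrt\beta)$, the set $\cT$ of \cref{eq: def cT} still satisfies the property \cref{lem: discrete T} for all of these frequencies, since the argument of \cref{lem: discrete T proof} applies essentially unchanged (each period-interval still contains at least $10$ points and there are at least six intervals). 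The products $\prod_k\cos((\varphi_i/\delta)t_k)$ then concentrate near $0$ as in \cref{lem: exponential decrease}, and the per-coordinate Gaussian comparison (means differing by $\big|\prod_k\cos((\varphi_i/\delta)t_k)\big|\cdot|x_i-y_i|$ and common variance at least $1/(2\hat\omega_i^2)$, using $\hat\omega_i\leq\sqrt\beta$) reproduces \cref{prop: tv bound on K step HMC} and \cref{cor:idealized-mixing}. This yields $\|\hat Q_x^K-\hat\pi\|_{\TV}\leq\eps/2$ as soon as $K\geq C\log\big(\tfrac{d\kappa(\sqrt\alpha\norm{x}_\infty+1)}{\eps}\big)$, the $O(1)$ factors in $\hat\alpha,\hat\beta,\hat\kappa$ being absorbed into $C$.

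The second term is the crux, and is where the constraint on $\delta$ (hence the $1/\sqrt\eps$ in the final cost) is born. Both $\pi$ and $\hat\pi$ are centred Gaussians with diagonal precisions $B$ and $\hat B$, so I would use Pinsker's inequality $\|\hat\pi-\pi\|_{\TV}\leq\sqrt{\KL(\hat\pi\|\pi)/2}$ together with the closed form of the Gaussian relative entropy. Writing $u_i=\delta^2\omega_i^2/4\in[0,\tfrac14]$, so that $\hat\omega_i^2/\omega_i^2=1-u_i$, the $i$-th coordinate contributes $\tfrac12\big(\tfrac{u_i}{1-u_i}+\log(1-u_i)\big)$ to $\KL(\hat\pi\|\pi)$, and a Taylor estimate (in which the terms linear in $u_i$ cancel) bounds this by $u_i^2$. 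Summing over $i$ and using $\sum_i\omega_i^4\leq d\beta^2$,
\[
\KL(\hat\pi\|\pi)\;\leq\;\sum_{i\in[d]}u_i^2\;=\;\frac{\delta^4}{16}\sum_{i\in[d]}\omega_i^4\;\leq\;\frac{\delta^4 d\beta^2}{16},
\]
whence $\|\hat\pi-\pi\|_{\TV}=O(\delta^2\sqrt d\,\beta)$. Requiring this to be at most $\eps/2$ is exactly what forces $\delta\leq C'\sqrt\eps/(\sqrt\beta\,d^{1/4})$, completing the total variation bound.

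Finally, each step of \cref{alg:lf-HMC} draws $t\in\cT$ and performs $t/\delta$ leapfrog steps at $O(1)$ gradient evaluations each; as every $t\in\cT$ satisfies $t<10\pi/\sqrt\alpha$, one step costs $O(1/(\sqrt\alpha\,\delta))$ queries, which for $\delta=\Theta(\sqrt\eps/(\sqrt\beta\,d^{1/4}))$ equals $O(\sqrt\kappa\,d^{1/4}/\sqrt\eps)$; over the $K$ steps forming $\hat Q_x^K$ this totals $O(\sqrt\kappa\,d^{1/4}K/\sqrt\eps)$. I expect the main obstacle to be the $\|\hat\pi-\pi\|_{\TV}$ estimate: the sharp $\delta\propto\sqrt\eps$ scaling (and hence the clean $1/\sqrt\eps$ cost) hinges on the quadratic cancellation in the Gaussian KL, whereas a first-order comparison of $B$ and $\hat B$ would only yield $\delta\propto\eps$ and a worse $1/\eps$ dependence.
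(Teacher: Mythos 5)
Your proposal is correct and follows essentially the same route as the paper: the same triangle-inequality split $\|\hat Q_x^K-\pi\|_{\TV}\leq\|\hat Q_x^K-\hat\pi\|_{\TV}+\|\hat\pi-\pi\|_{\TV}$, with the first term handled by applying the idealized mixing bound (\cref{cor:idealized-mixing}) to the exactly-integrated modified Hamiltonian $\hat\cH$ and the second forcing $\delta\in O(\sqrt{\eps}/(\sqrt{\beta}d^{1/4}))$. The only differences are cosmetic: you bound $\|\hat\pi-\pi\|_{\TV}$ via Pinsker and the Gaussian KL (with the same quadratic cancellation) where the paper invokes the mean-zero Gaussian TV bound $\frac32\|\Sigma_1^{-1}\Sigma_2-I\|_F$ in \cref{lem:dist-mod}, both giving $O(\delta^2\beta\sqrt{d})$, and you spell out more explicitly than the paper that the effective frequencies $\varphi_i/\delta$ remain within a constant dilation of $[\sqrt{\alpha},\sqrt{\beta}]$ so that \cref{lem: discrete T proof} still applies.
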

\begin{proof}
By our discussion of the leapfrog integrator in \cref{sec:leapfrog}, we know that $\hat Q$ corresponds to the idealized HMC algorithm for the modified Hamiltonian $\hat\cH$. Here we assume $\delta^2 \omega_i^2 \leq 4$ for all $i \in [d]$, i.e., $\delta \leq \frac{1}{\sqrt{\beta}}$. 
It thus follows from \cref{cor:idealized-mixing} that if we start from $x \in \R^d$ and take $K \geq C\log\left(\frac{d\kappa(\sqrt{\alpha}\norm{x}_\infty + 1)}{\eps}\right)$ steps of the chain $\hat Q$, for an appropriate constant $C>0$, then it returns a distribution that is $\eps/2$-close to the modified stationary $\hat\pi$ defined as 
\[
\hat \pi(x) \propto \exp(-\frac{1}{2} x^\top \hat B x).
\] 
Using that $\hat \pi$ and $\pi$ are both multivariate Gaussians, one can show (see \cref{lem:dist-mod} below for completeness)
\begin{equation*} 
\norm{\pi - \hat\pi}_{\TV} \leq \frac38 \delta^2\sqrt{\sum_i \omega_i^4}  \leq \frac38 \delta^2 \beta \sqrt{d}.
\end{equation*}
Hence by choosing a sufficiently small stepsize $\delta \in O(\sqrt{\eps}/(\sqrt{\beta} d^{1/4}))$, we have that $\| \hat\pi - \pi \|_{\TV} \leq \eps/2$. Together this shows that the resulting distribution after $K$ steps will be $\eps$-close to $\pi$.

It remains to bound the complexity of the algorithm. A single leapfrog step requires 2 gradient evaluations, and so a single step of the Markov chain $\hat Q$ requires $t/\delta \in O(\sqrt{\kappa} d^{1/4}/\sqrt{\eps})$ gradient evaluations.
Applying $K$ steps of the Markov chain yields a total number of gradient evaluations
\[
O\left( \frac{\sqrt{\kappa} d^{1/4} K}{\sqrt{\eps}} \right). \qedhere 
\]
\end{proof}

\begin{lemma} \label{lem:dist-mod}
Let $\pi(x) \propto \exp(-x^\top \diag(\bm \omega) x/2)$, $\hat \omega_i = \omega_i \sqrt{1-\frac{\delta^2 \omega_i^2}4}$ and $\hat \pi(x) \propto \exp(-x^\top \diag(\bm{\hat \omega}) x/2)$.
Then 
\[
\norm{\pi - \hat\pi}_{\TV} \leq \frac38 \delta^2\sqrt{\sum_i \omega_i^4}  \leq \frac38 \delta^2 \beta \sqrt{d}.
\]
\end{lemma}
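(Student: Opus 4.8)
The plan is to reduce the $d$-dimensional bound to a one-dimensional computation and then use the additivity of relative entropy together with Pinsker's inequality. Both $\pi$ and $\hat\pi$ are centered product Gaussians, with the $i$-th coordinate having precision $\omega_i^2$ and $\hat\omega_i^2$ respectively; that is, $\pi = \prod_i \cN(0,1/\omega_i^2)$ and $\hat\pi = \prod_i \cN(0,1/\hat\omega_i^2)$. Since the Kullback--Leibler divergence is additive over independent coordinates, I would write $\KL(\pi\|\hat\pi) = \sum_{i\in[d]} \KL(\cN(0,1/\omega_i^2)\,\|\,\cN(0,1/\hat\omega_i^2))$ and control each term separately, finally invoking $\|\pi-\hat\pi\|_{\TV} \leq \sqrt{\tfrac12\KL(\pi\|\hat\pi)}$.

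For a single coordinate, I would use the closed form for the KL divergence between two centered univariate Gaussians. Writing $a_i := \delta^2\omega_i^2/4$, so that $\hat\omega_i^2/\omega_i^2 = 1 - a_i$, the per-coordinate divergence simplifies to $\tfrac12\big(-a_i - \ln(1-a_i)\big)$. The key point is that, because the two Gaussians share the same mean (the origin), the first-order term in $a_i$ cancels and the divergence is \emph{second order}: $-a_i - \ln(1-a_i) = \tfrac{a_i^2}{2} + \tfrac{a_i^3}{3} + \cdots$. The hypothesis $\delta \leq 1/\sqrt{\beta}$ guarantees $a_i = \delta^2\omega_i^2/4 \leq 1/4$, so I can bound the tail of this series geometrically, e.g.\ $-a_i - \ln(1-a_i) \leq \tfrac{a_i^2}{2}\cdot\tfrac{1}{1-a_i} \leq \tfrac{2}{3}a_i^2$, giving a per-coordinate bound of order $a_i^2 = \delta^4\omega_i^4/16$.

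Summing over coordinates yields $\KL(\pi\|\hat\pi) \leq \tfrac{\delta^4}{48}\sum_i\omega_i^4$, and Pinsker's inequality then gives $\|\pi-\hat\pi\|_{\TV} \leq \tfrac{\delta^2}{\sqrt{96}}\sqrt{\sum_i\omega_i^4}$, which is comfortably below the claimed constant $\tfrac38$. The second inequality in the statement follows by coarsening $\omega_i^4 \leq \beta^2$, so that $\sqrt{\sum_i\omega_i^4}\leq\beta\sqrt{d}$. The only delicate point --- and the reason the correct scaling is $\sqrt{\sum_i\omega_i^4}$ rather than $\sum_i\omega_i^2$ --- is the quadratic-in-$a_i$ behaviour of the centered Gaussian KL: a naive coordinate-wise triangle inequality on the total variation distance would only give a per-coordinate bound linear in $a_i$, hence the weaker estimate $\delta^2\sum_i\omega_i^2 \leq \delta^2 d\beta$. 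Using relative entropy (which is second order and additive) before taking the square root is exactly what converts the sum of squares into the sharper $\ell_2$-type norm.
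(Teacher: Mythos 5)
Your proof is correct, but it takes a different route from the paper's. The paper proves this lemma in one line by invoking a known total-variation bound for centered multivariate Gaussians (from Devroye, Mehrabian and Reddad), namely $\norm{\cN(0,\Sigma_1)-\cN(0,\Sigma_2)}_{\TV} \leq \frac32\min\{1,\norm{\Sigma_1^{-1}\Sigma_2 - I}_F\}$, applied with the two diagonal covariance matrices; the Frobenius norm immediately produces the $\sqrt{\sum_i\omega_i^4}$ scaling with constant $\frac38$. Your argument replaces that external result with an elementary and self-contained chain: additivity of $\KL$ over the product structure, the closed form for the $\KL$ divergence between centered univariate Gaussians, the observation that equal means make the divergence second order in $a_i=\delta^2\omega_i^2/4$, and Pinsker's inequality. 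Your computations check out: $\KL(\pi\|\hat\pi)=\frac12\sum_i(-a_i-\ln(1-a_i))\leq \frac13\sum_i a_i^2$ under $a_i\leq 1/4$, giving $\norm{\pi-\hat\pi}_{\TV}\leq \frac{\delta^2}{\sqrt{96}}\sqrt{\sum_i\omega_i^4}$, which is stronger than the stated $\frac38$ constant. The only thing to flag is that you use the standing assumption $\delta\leq 1/\sqrt{\beta}$ (hence $a_i\leq 1/4$) to sum the series; this is not written in the lemma statement itself but holds everywhere the lemma is applied (and is needed anyway for $\hat\omega_i$ to be real). What your approach buys is independence from the cited Gaussian TV bound and a slightly better constant; what the paper's approach buys is brevity and no hypothesis on $\delta$ beyond well-definedness of $\hat\omega_i$.
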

\begin{proof}
For multivariate mean-zero Gaussians we have the following bound \cite{Devroye2022TotalVariationDistance}:
\begin{equation} \label{eq:gaussian mean zero tv}
\norm{\cN(0, \Sigma_1) - \cN(0, \Sigma_2)}_{\TV} \leq \frac32 \min \left\{1, \norm{\Sigma_1^{-1}\Sigma_2 - I}_F\right\}.
\end{equation}
Applying this bound for $\Sigma_1 = \diag(\bm{\hat \omega})$ and $\Sigma_2 = \diag(\bm \omega)$ we get
\[
    \norm{\pi - \tilde\pi}_{\TV} \leq \frac32 \sqrt{ \sum_i \left( \left(1- \frac{\delta^2 \omega_i^2}{4}\right) -1 \right)^2 }= \frac38 \delta^2\sqrt{\sum_i \omega_i^4} \leq \frac38 \delta^2 \beta \sqrt{d}. \qedhere
\]
\end{proof}

\section{Metropolis-Adjusted HMC} \label{sec:MAHMC}

Here we study the Metropolis-adjusted HMC algorithm.
The algorithm applies a Metropolis filter to correct for the numerical errors of the integrator.
This ensures that the algorithm has the correct stationary distribution, and leads to an overall improved error dependence.

\begin{algorithm}[H]
\caption{\textbf{Markov kernel $Q$} (Adjusted leapfrog HMC with random integration time)} \label{alg:adjusted-HMC}
\Input{$x \in \R^d$, stepsize $\delta \in O(1/(\sqrt{\beta} d^{1/4}))$, $\cT := \{k\cdot \delta \mid k\in \N, \ k \cdot \delta < 10\pi/\sqrt{\alpha}\}$}
\Output{$x' \in \R^d$}
Draw $v \sim \cN(0,I_d)$ and move from $x$ to $(x,v)$ \label{step1} \;
Draw $t \sim U(\cT)$  and set $(x', v') = \LF(x,v,t,\delta)$ \;
Accept with probability \label{step4}
    \[
    \min\Big\{ 1, \exp\big( -\cH(x',-v')+\cH(x, v) \big) \Big\}
    \]
and return $x'$.
Otherwise return $x' = x$\;
\end{algorithm}

We make a few observations about the adjusted HMC algorithm.
\begin{lemma} \label{lem:reversible}
The Markov kernel $Q$ defined in \cref{alg:adjusted-HMC} has the following properties:
\begin{enumerate} 
\item Kernel $Q$ is reversible with respect to the stationary distribution $\pi(x) \propto \exp(-\frac{1}{2} x^\top B x)$.
\item
The acceptance probability in \cref{step4} is a function of only $x$ and $x'$:
\[
\min\Big\{ 1, \exp\big( -\cH(x',-v')+\cH(x, v) \big) \Big\}
= \min\bigg\{ 1, \exp\bigg(\frac{\delta^2}{8} \sum_{i \in [d]} \omega_i^4 (x_i^2 - {x_i'}^2)\bigg) \bigg\}
\eqqcolon A(x,x'),
\]
and we can rewrite $Q_x(x') = \hat Q_x(x') A(x,x')$ for $x \neq x'$.
\end{enumerate}
\end{lemma}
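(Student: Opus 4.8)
The plan is to prove the second item first, since the explicit form of the acceptance probability is exactly what makes the reversibility argument in item~1 go through. The two inputs are the structural facts about $\LF$ from \cref{sec:leapfrog}: the leapfrog map \emph{exactly} integrates the modified Hamiltonian $\hat\cH$, so $\hat\cH$ is conserved along a trajectory, $\hat\cH(x,v)=\hat\cH(x',v')$; and the energy defect \eqref{eq: distance H and hat H}, $\cH(x,v)-\hat\cH(x,v)=\frac{\delta^2}{8}\sum_i\omega_i^4 x_i^2$. I would write
\[
\cH(x,v) = \hat\cH(x,v) + \tfrac{\delta^2}{8}\sum_i\omega_i^4 x_i^2, \qquad
\cH(x',-v') = \hat\cH(x',-v') + \tfrac{\delta^2}{8}\sum_i\omega_i^4 (x_i')^2,
\]
and note that the kinetic term is even in the momentum, so $\hat\cH(x',-v')=\hat\cH(x',v')$. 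Subtracting and invoking conservation $\hat\cH(x,v)=\hat\cH(x',v')$ cancels the $\hat\cH$ contributions and leaves $-\cH(x',-v')+\cH(x,v)=\frac{\delta^2}{8}\sum_i\omega_i^4(x_i^2-(x_i')^2)$, which depends only on the positions. The factorization $Q_x(x')=\hat Q_x(x')A(x,x')$ for $x\neq x'$ is then immediate from \cref{alg:adjusted-HMC}: the proposed $x'$ is distributed exactly as the unadjusted kernel $\hat Q$, and is kept with the position-only probability $A(x,x')$.

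For item~1 I would verify detailed balance $\pi(x)Q_x(y)=\pi(y)Q_y(x)$; the case $x=y$ is trivial, so I fix $x\neq y$ and work with $Q_x(y)=\hat Q_x(y)A(x,y)$. Two ingredients are needed. First, the unadjusted kernel $\hat Q$ is itself reversible with respect to $\hat\pi\propto\exp(-\frac12 x^\top\hat B x)$: since $\hat Q$ is idealized HMC for $\hat\cH$, this can be checked one coordinate at a time on the explicit Gaussian proposal \eqref{def: Pxt} (with $\hat\omega_i$ replacing $\omega_i$), where a short computation shows the exponent of $\hat\pi(x)\hat Q^t_x(y)$ is proportional to the symmetric quantity $x^2+y^2-2\cos(\hat\omega t)\,xy$; averaging over $t\in\cT$ preserves this symmetry. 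Second, using $\omega_i^2-\hat\omega_i^2=\frac{\delta^2\omega_i^4}{4}$ and writing $g(x):=\frac{\delta^2}{8}\sum_i\omega_i^4 x_i^2$, one gets $\pi(x)\propto\hat\pi(x)\,e^{-g(x)}$, so that $A(x,y)=\min\{1,e^{g(x)-g(y)}\}$.

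Combining these, I would compute
\[
\pi(x)Q_x(y) \propto \hat\pi(x)\,e^{-g(x)}\hat Q_x(y)\min\{1,e^{g(x)-g(y)}\}
= \hat\pi(x)\hat Q_x(y)\min\{e^{-g(x)},e^{-g(y)}\}.
\]
Applying reversibility of $\hat Q$ to replace $\hat\pi(x)\hat Q_x(y)$ by $\hat\pi(y)\hat Q_y(x)$, the right-hand side becomes $\hat\pi(y)\hat Q_y(x)\min\{e^{-g(x)},e^{-g(y)}\}$, which is manifestly invariant under swapping $x$ and $y$; reversing the same manipulation identifies it with $\pi(y)Q_y(x)$. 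Hence detailed balance holds and $\pi$ is stationary for $Q$.

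The main point requiring care, rather than a genuine obstacle, is justifying reversibility of $\hat Q$ cleanly: one must observe that the deterministic, volume-preserving leapfrog map composed with momentum resampling reproduces exactly the idealized Gaussian proposal for $\hat\cH$, and that the momentum negation in the acceptance rule (together with the evenness of the kinetic energy) is precisely what turns the Hamiltonian acceptance into the symmetric $\min$ that survives the cancellation. Everything else is bookkeeping with the explicit Gaussians and the energy-defect formula \eqref{eq: distance H and hat H}.
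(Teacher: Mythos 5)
Your proof is correct, but for part~1 it takes a genuinely different route from the paper. Your part~2 matches the paper's argument essentially verbatim (conservation of $\hat\cH$ by the leapfrog map, evenness of the kinetic term, and the energy-defect identity \eqref{eq: distance H and hat H}). For part~1, the paper lifts to phase space: it shows the phase-space proposal kernel $T$ is \emph{symmetric} because leapfrog with momentum negation is an involution, observes that the accept step is a Metropolis filter for $\tilde\pi(x,v)\propto e^{-\cH(x,v)}$, and then writes $\pi(x)Q(x,x')$ as $\iint T\cdot\min\{\pi(x)\mu(v),\pi(x')\mu(v')\}$, which is manifestly symmetric. You instead stay on position space: you use part~2 to write $Q_x(y)=\hat Q_x(y)A(x,y)$, note $\pi\propto\hat\pi\,e^{-g}$ and $A(x,y)=\min\{1,e^{g(x)-g(y)}\}$, and reduce everything to reversibility of $\hat Q$ with respect to $\hat\pi$, verified coordinate-wise on the explicit Gaussian proposal. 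Both are valid. The paper's argument is the more robust one --- it needs only that the integrator is a momentum-flip involution, not that it integrates any Hamiltonian exactly, so it would survive beyond the Gaussian setting. Yours exploits the Gaussian-specific structure but is arguably more informative here: it exhibits $Q$ explicitly as the Metropolis correction of $\hat Q$ from $\hat\pi$ to $\pi$, which is precisely the picture the paper relies on later (warm start from $\hat\pi$, conductance comparison). Two small points of care: your coordinate-wise symmetry computation $x^2+y^2-2\cos(\hat\omega t)xy$ breaks down formally when $\sin(\hat\omega_i t)=0$ (the proposal degenerates to the deterministic involution $y_i=\pm x_i$, for which detailed balance with respect to $\hat\pi$ still holds, but the density formula does not literally apply); and reversibility of $\hat Q$ with respect to $\hat\pi$ is not stated as a lemma in the paper (though it is used later), so you are right to supply the verification rather than cite it.
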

\begin{proof}[Proof of \cref{lem:reversible}, part 1]
This fact is well known for fixed integration times.
Here we prove that it also holds for \emph{randomized} integration times.

We prove first that $Q$ leaves the distribution $\pi(x) \propto \exp(-x^\top B x/2)$ invariant.
To this end, we look at the larger \emph{phase space}.
Starting from $x \sim \pi$, the state $(x,v)$ in step 2 is distributed according to the distribution
\[
\tilde\pi(x,v)
\propto \exp(-x^\top B x/2 - v^\top v/2)
= \exp(-\cH(x,v)).
\]
It remains to prove that steps 2.~and 3.~leave $\tilde\pi$ invariant.
Let $T$ denote the kernel of the proposal generated in step 2.~(i.e., proposal $(x',-v')$ has density $T((x,v),(x',v'))$).
First we note that $T$ is \emph{symmetric}, i.e., $T((x,v),(x',v')) = T((x',v'),(x,v))$.
To see this, recall that leapfrog integration is reversible in the sense that $\LF(x,v,t/\delta,\delta) = (x', v')$ implies that $\LF(x', -v', t/\delta, \delta) = (x, -v)$, and hence
\begin{align*}
    T((x, v), (x', v'))
    &= \frac{1}{|U(\cT)|} \sum_{t \in U(\cT)} \1\left\{ \LF(x, v, t) = (x', -v') \right\} \\
    &= \frac{1}{|U(\cT)|} \sum_{t \in U(\cT)} \1\left\{ \LF(x', v', t) = (x, -v) \right\}
    = T((x',v'),(x,v)).
\end{align*}
Then, note that step 3.~effectively implements a Metropolis filter w.r.t.~distribution $\tilde\pi$, which has acceptance probability
\[
A((x,v),(x',v'))
= \min\left\{1, \frac{\tilde\pi(x',v')}{\tilde\pi(x,v)} \right\}
= \min\Big\{ 1, \exp\big( -\cH(x',-v')+\cH(x, v) \big) \Big\}.
\]
It is then a direct consequence that steps 2.~and 3.~leave $\tilde\pi$ invariant as well.

Next, we show that $Q$ is in fact \emph{reversible} with respect to $\pi$, i.e.,
\[
\pi(x) Q(x,x')
= \pi(x') Q(x',x),\quad \text{ for all } x, x' \in \R^d.
\]
To do this, we use the fact that for all $x,v \in \R^d$, the density $\tilde\pi(x, v)$ factorizes as $\tilde\pi(x, v) = \pi(x) \mu(v)$ with $\mu(v) \sim \exp(-v^\top v/2)$ a standard Gaussian.
Using this, we get that
\begin{align*}
\pi(x) Q(x, x')
&= \pi(x) \iint\limits_{v, v' \in \R^d} T((x, v), (x', v'))
    A((x, v), (x', v'))\, \mu(v) \dif v \dif v' \\
&= \pi(x) \iint\limits_{v, v' \in \R^d} T((x, v), (x', v'))
    \min \left\{ 1, \frac{\pi(x')\mu(v')}{\pi(x)\mu(v)} \right\}\, \mu(v) \dif v \dif v' \\
&= \iint\limits_{v, v' \in \R^d} T((x, v), (x', v'))
    \min \left\{ \pi(x) \mu(v), \pi(x')\mu(v') \right\} \dif v \dif v'.
\end{align*}
Since each term in the last expression is symmetric under the exchange of $(x, v)$ with $(x', v')$, we conclude that it is equal to $\pi(x') Q(x,x')$ for all $x, x'$, and conclude that the chain is reversible.
\end{proof}

\begin{proof}[Proof of \cref{lem:reversible}, part 2]
First recall that $(x',v') = \LF(x,v,t/\delta,\delta)$.
From \cref{sec:leapfrog} we know that the leapfrog integrator preserves the modified Hamiltonian and therefore we have
\[
\hat \cH(x,v) = \hat \cH(x',v') = \hat \cH(x',-v'). 
\]
Moreover, by \cref{eq: distance H and hat H} we have 
\[
\cH(x,v) - \hat \cH(x,v) = \frac{\delta^2}{8} \sum_{i \in [d]} \omega_i^4 x_i^2
\]
for all $x,v \in \R^d$. Combining these two identities we find that
\begin{align*}
\cH(x, v) - \cH(x',- v')
&= \left(\hat\cH(x, v) +\frac{\delta^2}{8} \sum_{i \in [d]} \omega_i^4 x_i^2\right) - \left(\hat\cH(x',- v') +\frac{\delta^2}{8} \sum_{i \in [d]} \omega_i^4 {x_i'}^2\right) \\
&= \frac{\delta^2}{8} \sum_{i \in [d]} \omega_i^4 (x_i^2 - {x_i'}^2),
\end{align*}
and hence the acceptance probability takes the form $A(x,x')$ as claimed.

From this, it easily follows that $Q_x$ takes the form $Q_x(x') = \hat Q_x(x') A(x,x')$ for $x \neq x'$:
\begin{align*}
Q_x(x')
&= \iint\limits_{v, v' \in \R^d} T((x, v), (x', v'))
    A((x, v), (x', v'))\, \mu(v) \dif v \dif v' \\
&= A(x, x') \iint\limits_{v, v' \in \R^d} T((x, v), (x', v'))\, \mu(v) \dif v \dif v'
= A(x, x') \hat Q_x(x'). \qedhere
\end{align*}
\end{proof}

\subsection{Concentration bounds on high-dimensional Gaussian random variables}
\label{sec:concentration}

Here we use concentration bounds on high-dimensional Gaussians to show that if $x \sim \pi$ or $x \sim \hat\pi$ then with high probability the quantity $\sum_{i \in [d]} \omega_i^4 x_i^2$ is close to $\sum_{i \in [d]} \omega_i^2$. We moreover show that in that case $\pi(x)$ and $\hat \pi(x)$ differ by at most a small multiplicative factor. 

We will use the following version of the Hanson-Wright inequality~\cite{Hanson1971BoundTailProbabilities} which gives a concentration inequality for quadratic forms of independent Gaussian random variables.  
\begin{theorem}[{Hanson-Wright inequality~\cite[Thrm~6.2.1]{Vershynin2018HighDimensionalProbabilityIntroduction}}] \label{thrm:HW}
Let $X = (X_1,\ldots,X_d) \in \R^d$ be a random vector with independent $\cN(0,1)$ coordinates. Let $A$ be a $d \times d$ matrix. Then, for every $t \geq 0$, we have 
\[
\P\Big[|X^\top A X - \Exp[X^\top A X]| \geq t \Big] \leq 2 \exp\left(-C \min\bigg\{\frac{t^2}{K^4\|A\|_F^2},\frac{t}{K^2 \|A\|}\bigg\}\right),
\]
where $K,C>0$ are constants.\footnote{The theorem holds more generally for independent mean zero \emph{sub-gaussian} variables $X_i$. The constant $K$ then upper bounds the \emph{sub-gaussian norm} of all $X_i$.} 
\end{theorem}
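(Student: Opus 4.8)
The plan is to prove the stated Gaussian version directly, exploiting rotational invariance to avoid the decoupling machinery needed in the general sub-gaussian setting. First I would symmetrize: since $X^\top A X = X^\top \tfrac{A+A^\top}{2} X$, I may assume $A$ is symmetric, noting that $\|\tfrac{A+A^\top}{2}\|_F \le \|A\|_F$ and $\|\tfrac{A+A^\top}{2}\| \le \|A\|$, so replacing $A$ by its symmetric part only improves the right-hand side. Diagonalizing $A = U\Lambda U^\top$ with eigenvalues $\lambda_1,\dots,\lambda_d$ and using that $X \sim \cN(0,I_d)$ is rotationally invariant, the vector $Y := U^\top X$ is again $\cN(0,I_d)$, whence
\[
X^\top A X - \Exp[X^\top A X] = \sum_{i\in[d]} \lambda_i (Y_i^2 - 1) =: Z,
\]
a weighted sum of independent, centered $\chi^2_1$ variables with $\sum_i \lambda_i^2 = \|A\|_F^2$ and $\max_i |\lambda_i| = \|A\|$.

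Next I would control the tails by a Chernoff bound on the moment generating function. For $0 < s < 1/(2\|A\|)$ every factor is finite and
\[
\log \Exp\big[e^{sZ}\big] = \sum_{i\in[d]} \Big(-s\lambda_i - \tfrac12 \log(1-2s\lambda_i)\Big).
\]
The key elementary estimate is $-u - \tfrac12\log(1-2u) \le \tfrac{u^2}{1-2|u|}$ for $|u|<1/2$, which applied with $u = s\lambda_i$ and summed yields the sub-exponential bound
\[
\log \Exp\big[e^{sZ}\big] \le \frac{s^2 \|A\|_F^2}{1 - 2s\,\|A\|}.
\]
Optimizing $\P[Z \ge t] \le \exp(-st + \log\Exp[e^{sZ}])$ over $s$ produces the two-regime behaviour: choosing $s \asymp t/\|A\|_F^2$ gives the sub-gaussian tail $\exp(-ct^2/\|A\|_F^2)$ when $t \lesssim \|A\|_F^2/\|A\|$, while in the complementary regime the optimal $s$ saturates near $1/(2\|A\|)$ and gives the sub-exponential tail $\exp(-ct/\|A\|)$. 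Running the same argument for $-Z$ and taking a union bound over the two tails yields the stated inequality, with the factor $2$ and the constants $K,C$ absorbed appropriately.

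The main obstacle is making the Chernoff optimization clean enough to extract a single $\min\{\cdot,\cdot\}$ expression with uniform constants; the elementary logarithmic inequality above is precisely what linearizes the two regimes and keeps the constants explicit. I would also remark that Gaussianity is used in only one place, the rotation that diagonalizes $A$. For the general sub-gaussian statement of the footnote this shortcut is unavailable, and one instead splits the form into its diagonal part (a sum of independent centered sub-exponential terms, handled by Bernstein's inequality) and its off-diagonal part (handled by a decoupling step followed by a Gaussian-chaos comparison), at the cost of a considerably more involved analysis; since only the Gaussian case is needed here, the rotation-invariance proof suffices.
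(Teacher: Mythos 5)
Your proof is correct, but note that the paper does not prove this statement at all: it imports it verbatim as Theorem~6.2.1 of Vershynin's book, so any comparison is really with that textbook proof. What you give is a genuinely different and more elementary route that covers exactly the case the paper needs. The symmetrization step is sound (the symmetric part has smaller Frobenius and operator norms, so the bound for it implies the bound for $A$), the rotation to $Z=\sum_i\lambda_i(Y_i^2-1)$ is legitimate because $\cN(0,I_d)$ is rotation invariant, the exact $\chi^2_1$ moment generating function is available, and your elementary inequality $-u-\tfrac12\log(1-2u)\le u^2/(1-2|u|)$ does hold for $|u|<1/2$ (compare power series term by term), yielding the Bernstein-type bound $\log\Exp[e^{sZ}]\le s^2\|A\|_F^2/(1-2s\|A\|)$; the standard choice $s=t/(2\|A\|_F^2+2\|A\|t)$ then gives $\P[Z\ge t]\le\exp\left(-\tfrac14\min\{t^2/(2\|A\|_F^2),\,t/(2\|A\|)\}\right)$ cleanly, without any delicate limit as $s\to 1/(2\|A\|)$. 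By contrast, the cited general proof must handle arbitrary sub-gaussian coordinates (the footnote), for which no diagonalizing rotation preserves independence; it therefore splits off the diagonal part and decouples the off-diagonal chaos, exactly as you observe. Your argument buys a short, self-contained, constant-explicit proof of precisely the Gaussian instance used in the paper (where it is only ever applied to $z\sim\cN(0,I_d)$ with a diagonal $A$), at the cost of not covering the more general sub-gaussian statement; either is adequate for the paper's purposes.
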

Note that if $X \in \R^d$ is a random vector with independent $\cN(0,1)$ coordinates, then so is $Y = U X$ for a rotation matrix $U$. This rotation-invariance allows us to again assume, for ease of notation, that the input precision matrix $B = \diag(\bm \omega)$. For convenience, recall that $\pi(x)
= \frac{\prod_i \omega_i}{(2\pi)^{d/2}} \exp\left(- \frac{1}{2} \sum_i x_i^2 \omega_i^2\right)$, and (cf.~\cref{eq:hat omega}) that $\hat \pi$ is constructed similarly using $\bm{\hat \omega}$ which is defined, for each $i \in [d]$, as $\hat \omega_i =   \omega_i \sqrt{ 1 - \frac{\delta^2 \omega_i^2}{4}}$.
We have $\omega_i^2-\hat \omega_i^2 = \frac{1}{4}\delta^2 \omega_i^4$. For $\gamma \geq 1$, we define the measurable set 
\begin{equation} \label{eq:def E}
E_\gamma := \left\{x \in \R^d \mid \Big|x^\top \diag(\bm \omega)^4 x - \sum_i \omega_i^2\Big| \leq \gamma \sqrt{\sum_{i \in [d]}\omega_i^4}\right\}. \end{equation}
The Hanson-Wright inequality gives us the following concentration of measure for $\pi$ and $\hat \pi$.

\begin{lemma} \label{lem:concentration Egamma}
Let $\gamma \geq 1$ and consider $E_\gamma$ as in \cref{eq:def E} then we have the following:
\begin{enumerate}
    \item Let $\pi(x) \propto \exp(-\frac{1}{2}x^\top \diag(\bm \omega)^2 x)$, then $\pi(E_\gamma) \geq 1- 2 \exp\big(-C\gamma\big)$ where $C>0$ is a constant. 
    \item If $0<\delta \leq \beta^{-1/2} d^{-1/4}$, then for $\hat \pi(x) \propto \exp(-\frac{1}{2} x^\top \diag(\bm{\hat \omega})^2 x)$ we have $\hat \pi(E_\gamma) \geq 1- 2 \exp(-C'\gamma)$ where $C'>0$ is a constant. 
\end{enumerate}
\end{lemma}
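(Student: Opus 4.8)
The plan is to reduce both statements to the Hanson-Wright inequality (\cref{thrm:HW}) by a linear change of variables that turns a sample from $\pi$ (resp.\ $\hat\pi$) into a standard Gaussian vector. For part~1, if $x \sim \pi$ then $x_i = X_i/\omega_i$ with $X = (X_1,\dots,X_d)$ having independent $\cN(0,1)$ coordinates, and the relevant quadratic form becomes
\[
x^\top \diag(\bm \omega)^4 x = \sum_{i} \omega_i^4 x_i^2 = \sum_i \omega_i^2 X_i^2 = X^\top A X, \qquad A = \diag(\omega_i^2).
\]
Then $\Exp[X^\top A X] = \sum_i \omega_i^2$, which is exactly the centering constant appearing in the definition of $E_\gamma$. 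I would apply \cref{thrm:HW} with $t = \gamma\sqrt{\sum_i \omega_i^4}$, using $\|A\|_F^2 = \sum_i \omega_i^4$ and $\|A\| = \max_i \omega_i^2 \leq \sqrt{\sum_i \omega_i^4}$. This makes the two terms inside the minimum equal to $\gamma^2/K^4$ and at least $\gamma/K^2$; since $\gamma \geq 1$ gives $\gamma^2 \geq \gamma$, the minimum is at least $c\gamma$ for a constant $c$ depending only on $K$, yielding $\pi(E_\gamma) \geq 1 - 2\exp(-C\gamma)$.

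For part~2 the same substitution with $\hat\omega_i$ gives $x_i = X_i/\hat\omega_i$ and, using $\hat\omega_i^2 = \omega_i^2(1-\tfrac{\delta^2\omega_i^2}{4})$,
\[
x^\top \diag(\bm \omega)^4 x = \sum_i \frac{\omega_i^4}{\hat\omega_i^2} X_i^2 = X^\top \hat A X, \qquad \hat A = \diag\!\Big(\tfrac{\omega_i^2}{1-\delta^2\omega_i^2/4}\Big).
\]
The stepsize bound $\delta \leq \beta^{-1/2}d^{-1/4}$ gives $\delta^2\omega_i^2/4 \leq 1/(4\sqrt{d}) \leq 1/4$, so each factor $1-\delta^2\omega_i^2/4$ lies in $[3/4,1]$; hence $\hat A_{ii} \in [\omega_i^2, \tfrac43\omega_i^2]$, so $\|\hat A\|_F \leq \tfrac43\sqrt{\sum_i\omega_i^4}$ and $\|\hat A\| \leq \tfrac43\max_i\omega_i^2$, each within a constant factor of the part~1 quantities.

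The one genuinely new point---and the step I expect to be the main obstacle---is that the mean of the quadratic form under $\hat\pi$ is no longer the centering constant $\sum_i\omega_i^2$ used in $E_\gamma$. I would control this shift $m := \Exp[X^\top \hat A X] - \sum_i\omega_i^2 = \sum_i \omega_i^2 \cdot \tfrac{\delta^2\omega_i^2/4}{1-\delta^2\omega_i^2/4}$ by bounding it as $m \leq \tfrac{\delta^2}{3}\sum_i\omega_i^4$, and then using $\sum_i\omega_i^4 \leq d\beta^2$ together with $\delta^2 \leq \beta^{-1}d^{-1/2}$ to get $\delta^2\sqrt{\sum_i\omega_i^4}\leq 1$, hence $m \leq \tfrac13\sqrt{\sum_i\omega_i^4} \leq \tfrac{\gamma}{3}\sqrt{\sum_i\omega_i^4}$ for $\gamma\geq 1$. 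A triangle inequality then shows that the event $\{|X^\top\hat A X - \Exp[X^\top\hat A X]| \leq t\}$ with $t = \gamma\sqrt{\sum_i\omega_i^4} - m \geq \tfrac{2}{3}\gamma\sqrt{\sum_i\omega_i^4}$ is contained in $E_\gamma$. Feeding this $t$ into \cref{thrm:HW} with the bounds on $\|\hat A\|_F$ and $\|\hat A\|$ (and using $\sqrt{\sum_i\omega_i^4}\geq \max_i\omega_i^2$ to lower-bound $t/\|\hat A\|$ by a constant times $\gamma$) again produces a minimum of size at least $c'\gamma$, giving $\hat\pi(E_\gamma)\geq 1 - 2\exp(-C'\gamma)$. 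The remaining computations are routine constant-chasing.
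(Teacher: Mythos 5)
Your proposal is correct and follows essentially the same route as the paper's proof: whiten the Gaussian so the quadratic form becomes $X^\top A X$ for standard Gaussian $X$, apply the Hanson--Wright inequality with deviation proportional to $\gamma\|A\|_F$ (using $\|A\|_F \geq \|A\|$ to reduce the minimum to $c\gamma$), and for part~2 absorb the mean shift $\sum_i \omega_i^4/\hat\omega_i^2 - \sum_i\omega_i^2$ into the deviation radius via the stepsize bound $\delta^2 \leq \beta^{-1}d^{-1/2}$. The only differences are in the bookkeeping constants (the paper bounds the shift by $\tfrac12\sqrt{\sum_i\omega_i^4}$ and re-centers with radius $\tfrac{\gamma}{4}\|\hat A\|_F$, you use $\tfrac13$ and $\tfrac23\gamma$), which are immaterial.
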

\begin{proof} We first prove the concentration of measure for $\pi$. 
We have 
\begin{align*}
\pi(E_\gamma) &= \P_{x \sim \pi}\left[\Big|x^\top \diag(\bm \omega)^4 x - \sum_i \omega_i^2\Big| \leq \gamma \sqrt{\sum_{i \in [d]}\omega_i^4}\right] \\
&= \P_{z \sim \mathcal N(0,I_d)}\left[\Big|z^\top \diag(\bm \omega)^2 z - \sum_i \omega_i^2\Big| \leq \gamma \sqrt{\sum_{i \in [d]}\omega_i^4}\right] 
\end{align*}
where we set $z_i = \omega_i x_i$ for each $i \in [d]$ and observe that $z_i \sim \mathcal N(0,1)$. We apply \cref{thrm:HW} to the vector $z$, matrix $A=\diag(\bm \omega)^2$, $t = \gamma \norm{A}_F$, and note that $\norm{A}_F \geq \norm{A}$ implies the lower bound
\[
    \min \left\{ \frac{(\gamma \norm{A}_F)^2}{K^4 \norm{A}_F^2}, \frac{\gamma \norm{A}_F}{K^2 \norm{A}} \right\} \geq \min \left\{ \frac{\gamma^2}{K^4}, \frac{\gamma}{K^2} \right\} \geq \gamma \min \{K^{-2}, K^{-4}\}.
\]
Therefore, for $C \leq \min \{K^{-2}, K^{-4}\}$ we obtain the desired bound for $\pi$.

We now use the same proof strategy to show concentration for $\hat \pi$. We have 
\begin{align*}
    \hat \pi(E_\gamma) &= \P_{x \sim \hat \pi}\left[\Big|x^\top \diag(\bm \omega)^4 x - \sum_i \omega_i^2\Big| \leq \gamma \sqrt{\sum_{i \in [d]}\omega_i^4}\right] \\
&= \P_{z \sim \mathcal N(0,I_d)}\left[\Big|z^\top \diag(\bm \omega)^4 \diag(\bm{\hat \omega})^{-2} z - \sum_i \omega_i^2\Big| \leq \gamma \sqrt{\sum_{i \in [d]}\omega_i^4}\right] \\
&\geq \P_{z \sim \mathcal N(0,I_d)}\left[\Big|z^\top \diag(\bm \omega)^4 \diag(\bm{\hat \omega})^{-2} z - \sum_i \omega_i^4/\hat \omega_i^2\Big| \leq \gamma \sqrt{\sum_{i \in [d]}\omega_i^4} - \Big|\sum_i \omega_i^2-\omega_i^4/\hat \omega_i^2\Big|\right] 
\end{align*}
By definition $\omega_i^4/\hat \omega_i^2 = \omega_i^2/(1-\delta^2 \omega_i^2/4)$, and the upper bound on $\delta$ implies that $\delta^2 \omega_i^2 \leq 2$. Using this bound, we get
\begin{align*}
    \left|\sum_i \omega_i^2 - \omega_i^4/\hat \omega_i^2\right|
    &= \sum_i \omega_i^2\left(1-\frac{1}{1-\delta^2\omega_i^2/4}\right) \\
    &\leq \sum_i \omega_i^2(1-1+\delta^2\omega_i^2/2)\\
    &= \frac12\sum_i \delta^2 \omega_i^4 \leq \frac{1}{2\sqrt{d}}\sum_i \omega_i^2 \leq \frac12 \sqrt{\sum_i \omega_i^4}.
\end{align*}
Again using the fact that $\omega_i^4 / \hat \omega_i^2 \leq 2 \omega_i^2$, we can further lower bound $\hat \pi(E_\gamma)$ as follows:
\begin{align*}
    \hat \pi(E_\gamma) &\geq \P_{z \sim \mathcal N(0,I_d)}\left[\Big|z^\top \diag(\bm \omega)^4 \diag(\bm{\hat \omega})^{-2} z - \sum_i \omega_i^4/\hat \omega_i^2\Big| \leq \frac{\gamma}4 \sqrt{\sum_{i \in [d]}(\omega_i^4/\hat \omega_i^2)^2}\right].
\end{align*}
We can then again apply \cref{thrm:HW} to obtain $\hat \pi(E_\gamma) \geq 1- 2 \exp(-C' \gamma)$ for a suitable constant $C'>0$. 
\end{proof}

Next we give a bound on $\hat \pi(x)/\pi(x)$ for all $x \in E_\gamma$, which we will use later to show that $\hat \pi$ can be used as a warm start for $\pi$. 

\begin{lemma} \label{lem: pointwise ratio in E}
Let $\pi(x) \propto \exp(-\frac{1}{2}x^\top \diag(\bm \omega)^2 x)$, let $\gamma \geq 1$ and consider $E_\gamma$ as defined in \cref{eq:def E}. Let~$\delta = \frac{1}{10\sqrt{\gamma \beta} d^{1/4}}$, set $\hat \omega_i =   \omega_i \sqrt{ 1 - \frac{\delta^2 \omega_i^2}{4}}$ for each $i \in [d]$, and let $\hat\pi(x) \propto \exp(-\frac{1}{2} x^\top \diag(\bm \hat\omega)^2 x)$. Then for all $x \in E_\gamma$ we have 
\[
0.9 \leq \frac{\hat \pi(x)}{\pi(x)} \leq 1.1.
\]
\end{lemma}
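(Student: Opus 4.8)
The plan is to write $\hat\pi(x)/\pi(x)$ as a product of two factors—a ratio of normalization constants and a ratio of the exponential densities—and to show that the leading contributions of the two factors cancel exactly, leaving only a fluctuation term that is controlled on $E_\gamma$. Writing $\pi(x) = \frac{\prod_i\omega_i}{(2\pi)^{d/2}}\exp(-\frac12\sum_i\omega_i^2 x_i^2)$ and analogously for $\hat\pi$ with $\hat\omega_i$, I would first record that
\[
\frac{\hat\pi(x)}{\pi(x)} = \left(\prod_{i\in[d]}\frac{\hat\omega_i}{\omega_i}\right)\exp\left(-\frac{1}{2}\sum_{i\in[d]}(\hat\omega_i^2 - \omega_i^2)\,x_i^2\right).
\]
Since $\hat\omega_i^2 - \omega_i^2 = -\tfrac{\delta^2\omega_i^4}{4}$, the exponential factor equals $\exp\bigl(\tfrac{\delta^2}{8}\sum_i\omega_i^4 x_i^2\bigr)$.

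Next I would estimate the normalization ratio. Writing $\prod_i\tfrac{\hat\omega_i}{\omega_i} = \exp\bigl(\tfrac12\sum_i\log(1-u_i)\bigr)$ with $u_i = \tfrac{\delta^2\omega_i^2}{4}$, I would apply the elementary bounds $-u-u^2 \le \log(1-u) \le -u$, valid for $0\le u\le 1/2$. By the choice $\delta = \tfrac{1}{10\sqrt{\gamma\beta}\,d^{1/4}}$ and $\omega_i^2\le\beta$, each $u_i \le \tfrac{1}{400\gamma\sqrt d}$, so these bounds apply and give $\prod_i\tfrac{\hat\omega_i}{\omega_i} = \exp\bigl(-\tfrac{\delta^2}{8}\sum_i\omega_i^2 + r\bigr)$, where $-\tfrac12\sum_i u_i^2 \le r \le 0$ and $\tfrac12\sum_i u_i^2 = \tfrac{\delta^4}{32}\sum_i\omega_i^4$ is of order $1/\gamma^2$ after substituting $\delta$, hence negligible. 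The key point is that the $-\tfrac{\delta^2}{8}\sum_i\omega_i^2$ term coming from the normalization cancels exactly the mean of the quadratic form appearing in the exponential factor.

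Combining the two factors, the ratio becomes $\exp\bigl(\tfrac{\delta^2}{8}(\sum_i\omega_i^4 x_i^2 - \sum_i\omega_i^2) + r\bigr)$. On $E_\gamma$ the definition in \cref{eq:def E} gives $\bigl|\sum_i\omega_i^4 x_i^2 - \sum_i\omega_i^2\bigr| \le \gamma\sqrt{\sum_i\omega_i^4}$, so the exponent has absolute value at most $\tfrac{\delta^2\gamma}{8}\sqrt{\sum_i\omega_i^4} + |r|$. Using $\sqrt{\sum_i\omega_i^4}\le\sqrt d\,\beta$ and $\delta^2 = \tfrac{1}{100\gamma\beta\sqrt d}$, the fluctuation term evaluates to $\tfrac{\gamma\sqrt d\,\beta}{8\cdot100\gamma\beta\sqrt d} = \tfrac{1}{800}$ (note the $\gamma$ cancels), while $|r|$ is far smaller. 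Thus the exponent lies in a tiny interval around $0$, and $\exp(\pm\tfrac{1}{800}) \subseteq [0.9,1.1]$ yields the claim.

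The main obstacle is conceptual rather than computational: the crude bound $\sum_i\omega_i^2 \le d\beta$ makes the exponential factor alone grow like $\exp(\Theta(\sqrt d))$, so the statement can only hold because the normalization ratio cancels this mean contribution. The only technical care needed is to quantify that cancellation—controlling the second-order error in $\log(1-u)$—but since $\delta$ is tuned so that each $u_i = O(1/(\gamma\sqrt d))$, this error is comfortably subdominant to the $\tfrac{1}{800}$ fluctuation bound.
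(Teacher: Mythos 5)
Your proposal is correct and follows essentially the same route as the paper: the same factorization into a normalization ratio times $\exp\bigl(\tfrac{\delta^2}{8}\sum_i\omega_i^4x_i^2\bigr)$, the same cancellation of the mean $\tfrac{\delta^2}{8}\sum_i\omega_i^2$ against the quadratic form, and equivalent elementary bounds (your $-u-u^2\le\log(1-u)\le -u$ is exactly the paper's pair $1-z\le e^{-z}$ and $1-z\ge e^{-(1+z)z}$). Your symmetric treatment via the remainder $r$ is a slightly cleaner packaging of the paper's separate upper and lower bound computations, but there is no substantive difference.
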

\begin{proof}
For $x \in \R^d$ we have 
\[
\frac{\hat\pi(x)}{\pi(x)}
= \left( \prod_i \Big(1 - \frac{\delta^2\omega_i^2}{4}\Big) \right)^{1/2} \exp\left(\frac{\delta^2}{8} \sum_i x_i^2 \omega_i^4 \right).
\]
We first obtain an upper bound on $\frac{\hat \pi(x)}{\pi(x)}$ for $x \in E_\gamma$. 
Using the inequality $1-z \leq \exp(-z)$ (which holds for all $z \in \R$), we obtain 
\[
\frac{\hat\pi(x)}{\pi(x)} \leq \exp\left(\frac{\delta^2}{8} \left(\sum_i x_i^2 \omega_i^4 -\omega_i^2\right)\right) \leq \exp\left(\frac{1}{8}\delta^2 \gamma \sqrt{\sum_{i \in [d]} \omega_i^4}\right) \leq  \exp\left(\frac{1}{800}\right) \leq 1.1
\]
where in the second inequality we use that $x \in E_\gamma$. 

We can similarly bound $\frac{\hat \pi(x)}{\pi(x)}$ from below for $x \in E_\gamma$. For this we use the inequality $1-z \geq \exp(-\eta z)$ which holds for $0 \leq z<1$ and $\eta \geq \frac{1}{z} \ln(\frac{1}{1-z})$. For $z\leq 1/2$ one has $\frac{1}{z} \ln(\frac{1}{1-z}) \leq 1+z$ and thus $\eta = 1+z$ suffices. We apply this with $z = \frac{\delta^2 \omega_i^2}{4} \leq \frac{1}{400 \gamma \sqrt{d}} <1/2$. This allows us to lower bound $\frac{\hat \pi(x)}{\pi(x)}$ as 
\begin{align*}
    \frac{\hat \pi(x)}{\pi(x)} &\geq \exp\left( - \frac12 \left(1+\frac{1}{400\gamma\sqrt{d}}\right)\frac{\delta^2}{4} \sum_i\omega_i^2 \right)\exp\left(\frac{\delta^2}{8} \sum_i x_i^2 \omega_i^4 \right) \\
    &\geq \exp\left( - \frac12 \frac{1}{400\gamma\sqrt{d}}\frac{\delta^2}{4} \sum_i\omega_i^2  - \frac{\delta^2}{8} \left|\sum_i x_i^2 \omega_i^4 - \sum_i \omega_i^2\right| \right) \\
    &\geq \exp\left( - \frac{1}{3200\cdot 100\gamma^2d \beta}\sum_i\omega_i^2  - \frac{1}{800} \right) \geq \exp\left( - \frac{1}{400} \right) \geq 0.9
\end{align*}
where in the third inequality we use that $\delta^2 = \frac{1}{100 \gamma \beta \sqrt{d}}$ and $x \in E_\gamma$. 
\end{proof}

Finally, we note that the acceptance probability is large on $E_\gamma$.

\begin{lemma} \label{lem:accept-prob}
Let $A(x,x')$ be the acceptance probability of the adjusted leapfrog HMC with step size~$\delta$.
If $x,x' \in E_\gamma$ then $A(x,x') \geq \exp\left( -\frac{\delta^2 \gamma}{4} d^{1/2} \beta \right)$. 
\end{lemma}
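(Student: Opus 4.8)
The plan is to lower bound the acceptance probability directly from the explicit form established in part 2 of \cref{lem:reversible}, namely
\[
A(x,x') = \min\left\{1, \exp\left(\frac{\delta^2}{8}\sum_{i\in[d]}\omega_i^4(x_i^2 - {x_i'}^2)\right)\right\}.
\]
Since the claimed lower bound $\exp(-\frac{\delta^2\gamma}{4}d^{1/2}\beta)$ is at most $1$, the branch of the $\min$ equal to $1$ trivially dominates it, so it suffices to lower bound the exponent $\frac{\delta^2}{8}\sum_i \omega_i^4(x_i^2 - {x_i'}^2)$, equivalently to upper bound $\sum_i \omega_i^4({x_i'}^2 - x_i^2)$.

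First I would recognize that $\sum_i \omega_i^4 x_i^2 = x^\top \diag(\bm\omega)^4 x$ is exactly the quantity appearing in the definition \cref{eq:def E} of $E_\gamma$, and re-centre the sum by subtracting $\sum_i \omega_i^2$ from each energy term:
\[
\sum_i \omega_i^4(x_i^2 - {x_i'}^2) = \left(\sum_i \omega_i^4 x_i^2 - \sum_i \omega_i^2\right) - \left(\sum_i \omega_i^4 {x_i'}^2 - \sum_i \omega_i^2\right).
\]
The common recentring cancels, and each bracket is precisely what the defining inequality of $E_\gamma$ controls. Applying the triangle inequality together with $x,x'\in E_\gamma$ then gives
\[
\left|\sum_i \omega_i^4(x_i^2 - {x_i'}^2)\right| \leq 2\gamma\sqrt{\sum_{i\in[d]}\omega_i^4}.
\]

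It remains to control the factor $\sqrt{\sum_i \omega_i^4}$ using the spectral bound $\omega_i^2 \leq \beta$, which yields $\sqrt{\sum_i \omega_i^4}\leq \sqrt{d\beta^2} = \sqrt{d}\,\beta$. Substituting both estimates into the exponent gives
\[
\frac{\delta^2}{8}\sum_i \omega_i^4(x_i^2 - {x_i'}^2) \geq -\frac{\delta^2\gamma}{4}\sqrt{\sum_i\omega_i^4} \geq -\frac{\delta^2\gamma}{4}d^{1/2}\beta,
\]
and hence $A(x,x') \geq \exp(-\frac{\delta^2\gamma}{4}d^{1/2}\beta)$, as claimed.

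There is no genuine obstacle here: the whole argument is a single triangle inequality exploiting that both endpoints lie in $E_\gamma$, followed by a crude $\ell_2$-to-$\ell_\infty$ bound on the $\omega_i^4$. The only point requiring a little care is the direction of the inequality inside the $\min$ — we need a \emph{lower} bound on the exponent (equivalently an upper bound on the energy difference $\sum_i \omega_i^4({x_i'}^2-x_i^2)$), and the clipping at $1$ only ever helps, since the target quantity is itself at most $1$.
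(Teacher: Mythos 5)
Your proof is correct and follows essentially the same route as the paper's (one-line) argument: use the defining inequality of $E_\gamma$ at both endpoints, a triangle inequality to get $\big|\sum_i \omega_i^4(x_i^2-{x_i'}^2)\big|\leq 2\gamma\sqrt{\sum_i\omega_i^4}\leq 2\gamma\sqrt{d}\,\beta$, and observe that the clipping at $1$ in the $\min$ can only help. Your write-up is in fact slightly more careful than the paper's about the direction of the inequality needed inside the exponent.
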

\begin{proof}
If both $x, x' \in E_\gamma$ then we have that $\sum_{i \in [d]} \omega_i^4 (x_i^2 - {x_i'}^2) \leq 2 \gamma \sqrt{\sum_i \omega_i^4} \leq 2 \gamma d^{1/2} \beta$. 
\end{proof}

\cref{lem: pointwise ratio in E,lem:accept-prob} tell us that the stepsize $\delta$ should scale with $\gamma,d$ and $\beta$ as 
\begin{equation} \label{eq:choice-delta}
\delta = \frac{1}{10\sqrt{\gamma \beta}d^{1/4}}.
\end{equation}
This choice of $\delta$ ensures a high acceptance probability whenever $x,x' \in E_\gamma$ and a pointwise bound on the ratio $\hat \pi(x)/\pi(x)$ for $x \in E_\gamma$. In the next section we tune the choice of $\gamma \geq 1$ to apply an argument based on the $s$-conductance. 

\subsection{\texorpdfstring{$s$}{s}-conductance and warm start}
We will bound the mixing time of the Metropolis-adjusted chain using the so-called \emph{$s$-conductance}.
This is a generalization of the conductance that allows to ignore small subsets of measure $\pi(S) \leq s$.

\begin{definition}[$s$-conductance]
Let $0<s<1/2$ and define the $s$-conductance~$C_s$ of a Markov chain with transition kernel $T$ and stationary distribution $\pi$ as
\[
C_s \coloneqq \inf\left\{ C_s(S) \mid S \subseteq \R^d \text{ measurable}, \ s < \pi(S) \leq \frac{1}{2}\right\},
\; \text{ with } \;
C_s(S) \coloneqq \frac{\int_S T(x,S^c) \pi(\dx)}{\pi(S)-s}.
\]
\end{definition}

The $s$-conductance leads to a mixing time bound through the following theorem from Lovász and Simonovits \cite{Lovasz1993RandomWalksConvex} (the exact formulation below is from \cite[Lem.~1]{Wu2021MinimaxMixingTime}).
It uses a \emph{warmness} parameter~$D^{\mu_0,\pi}_s$ between the initial distribution $\mu_0$ and target distribution $\pi$, which for $0<s<1/2$ is defined by
\[
D^{\mu_0,\pi}_s
\coloneqq \sup\{|\mu_0(A)-\pi(A)| \,:\, A \subseteq \R^d \text{ measurable, } \pi(A) \leq s\}. 
\]

\begin{lemma}[\cite{Lovasz1993RandomWalksConvex}] \label{lem:s-cond-bound}
Consider a reversible, lazy\footnote{A lazy chain takes a step with probability $1/2$, and otherwise does nothing.} Markov chain with transition kernel $R$, stationary distribution $\pi$ and initial distribution $\mu_0$.
Then for any $K \geq 0$ it holds that
\[
\| R^K_{\mu_0} - \pi \|_{\TV}
\leq D^{\mu_0,\pi}_s + \frac{D^{\mu_0,\pi}_s}{s}
    \left(1 - \frac{C_s^2}{2}\right)^K.
\]
\end{lemma}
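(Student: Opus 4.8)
The plan is to prove this by the Lov\'asz--Simonovits potential-curve method: I track a single concave ``level curve'' attached to the iterates $\mu_k := R^k \mu_0$ and show it contracts toward the diagonal at rate $1-C_s^2/2$. For a distribution $\mu$ absolutely continuous with respect to $\pi$ with density $u = \dif\mu/\dif\pi$, define $g_\mu\colon[0,1]\to[0,1]$ by $g_\mu(a) = \sup\{\mu(A) : \pi(A)=a\}$. The supremum is attained at a superlevel set $\{u \geq c_a\}$ of $u$, which makes $g_\mu$ concave and nondecreasing with $g_\mu(0)=0$ and $g_\mu(1)=1$, so that $g_\mu(a)\geq a$. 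I would rely on two facts: $\|\mu-\pi\|_{\TV} = \sup_a\,(g_\mu(a)-a)$, and, writing $g_k := g_{\mu_k}$ and $D_s := D^{\mu_0,\pi}_s$ for brevity, the warmness hypothesis reads $g_0(a)-a \leq D_s$ for $a\leq s$ and (by complementation) for $a\geq 1-s$. The goal then becomes to bound $\sup_a(g_K(a)-a)$.

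The crux is a one-step smoothing inequality: for $a\in[s,1-s]$,
\[
g_{k+1}(a) \;\leq\; \tfrac12\, g_k\!\big(a-2C_s\tau_a\big) + \tfrac12\, g_k\!\big(a+2C_s\tau_a\big),\qquad \tau_a:=\min\{a-s,\,1-s-a\},
\]
while $g_k(a)-a\leq D_s$ is maintained for $a\notin(s,1-s)$. To derive this I would take the superlevel set $A$ achieving $g_{k+1}(a)$, split the lazy kernel as $R = \tfrac12 I + \tfrac12 R'$ so that $\mu_{k+1}(A) = \tfrac12\mu_k(A) + \tfrac12(R'\mu_k)(A)$, and use detailed balance to balance the flow of mass across $\partial A$. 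Bounding the escaping mass from below via the $s$-conductance, $\int_A R(x,A^c)\,\pi(\dif x) \geq C_s(\pi(A)-s)$, and replacing $A$ by suitably enlarged and shrunk superlevel sets $A^{\pm}$ (so that $\mu_k(A^{\pm})\leq g_k(\pi(A^{\pm}))$), yields exactly the two shifted evaluations with shift $2C_s\tau_a$. I expect this to be the main obstacle: the ``$-s$'' discount in $C_s(S)$ is precisely what forces the shift to be $2C_s\min\{a-s,1-s-a\}$ rather than the $2C_s\min\{a,1-a\}$ of the ordinary-conductance argument, and keeping the in/out flows, laziness, and the rearrangement bound mutually consistent requires care.

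With the smoothing inequality in hand, the rest is an induction against a fixed concave reference curve
\[
G_k(a) \;=\; a + D_s + \frac{D_s}{s}\,\phi(a)\Big(1-\tfrac{C_s^2}{2}\Big)^{k},
\]
where $\phi$ is concave, $0\leq\phi\leq 1$, vanishes on $[0,s]\cup[1-s,1]$, is a rescaling of $\sqrt{\min\{a-s,1-s-a\}}$ normalized so that $\phi(\tfrac12)=1$. For the base case $g_0\leq G_0$ I would use warmness on $[0,s]\cup[1-s,1]$ and, on $[s,1-s]$, the concavity of $g_0$ anchored at $(0,0)$ and $(1,1)$, which gives $g_0(a)-a \leq \tfrac{D_s}{s}\min\{a,1-a\}$; since $\min\{a,1-a\}\leq s+\sqrt{\min\{a-s,1-s-a\}}$ this is dominated by $D_s+\tfrac{D_s}{s}\phi(a)$. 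For the inductive step, applying the smoothing inequality to $g_k\leq G_k$ and using the elementary estimate
\[
\tfrac12\sqrt{u(1-2C_s)} + \tfrac12\sqrt{u(1+2C_s)} \;\leq\; \sqrt{u}\,\Big(1-\tfrac{C_s^2}{2}\Big),
\]
(which follows from $\sqrt{1-x}+\sqrt{1+x}\leq 2\sqrt{1-x^2/4}\leq 2(1-x^2/8)$ at $x=2C_s$), shows that $G_k$ is carried into $G_{k+1}$: the affine part $a+D_s$ is fixed by the averaging and the near-endpoint bound $\leq D_s$ persists, while the $\phi$-bump contracts by the factor $1-C_s^2/2$. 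Hence $g_K\leq G_K$, and finally
\[
\|R^K_{\mu_0}-\pi\|_{\TV} = \sup_a(g_K(a)-a) \leq \sup_a(G_K(a)-a) = D_s + \frac{D_s}{s}\Big(1-\tfrac{C_s^2}{2}\Big)^{K},
\]
using $\sup_a\phi(a)=1$, which is the claimed bound.
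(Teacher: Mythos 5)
The paper does not actually prove this lemma: it is imported verbatim from Lov\'asz and Simonovits \cite{Lovasz1993RandomWalksConvex}, in the formulation of \cite[Lem.~1]{Wu2021MinimaxMixingTime}, so there is no in-paper argument to compare against. What you have written is a reconstruction of the canonical Lov\'asz--Simonovits potential-curve proof, and its architecture is the right one: the level curve $g_\mu$, the identity $\|\mu-\pi\|_{\TV}=\sup_a(g_\mu(a)-a)$, the translation of warmness into a bound on $g_0(a)-a$ for $a\le s$ and (by complementation) $a\ge 1-s$, the one-step smoothing inequality with shift $2C_s\min\{a-s,1-s-a\}$, and the induction against a concave reference curve using $\sqrt{1-x}+\sqrt{1+x}\le 2\sqrt{1-x^2/4}\le 2(1-x^2/8)$ are exactly the ingredients of the original argument. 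The base case and the contraction of the $\phi$-bump check out as you present them.

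Two points keep this from being a complete proof. First, the one-step smoothing inequality --- which you correctly single out as the crux --- is described rather than derived. The clean derivation writes $R(y,A)=\tfrac12(f_1(y)+f_2(y))$ with $f_1=(2R(\cdot,A)-1)\,\1_A$ and $f_2=\1_A+2R(\cdot,A)\1_{A^c}$; laziness puts both in $[0,1]$, stationarity and reversibility give $\int f_1\dif\pi=a-2F$ and $\int f_2\dif\pi=a+2F$ with $F$ the ergodic flow out of $A$, and bounding each $\int f_i\dif\mu_k$ by the (fractional-set) level curve yields the two shifted evaluations; the $s$-conductance bound $F\ge C_s\tau_a$ plus concavity then gives your inequality. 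You name the right ingredients but do not execute this bookkeeping, and it is the only genuinely nontrivial step. Second, you assert that $g_k(a)-a\le D^{\mu_0,\pi}_s$ ``is maintained'' for $a\notin(s,1-s)$; this needs the pointwise monotonicity $g_{k+1}\le g_k$, which follows since $\mu_{k+1}(A)=\int R(y,A)\,\mu_k(\dif y)$ with $0\le R(\cdot,A)\le 1$ and $\int R(y,A)\,\pi(\dif y)=a$, so $\mu_{k+1}(A)$ is dominated by the supremum of $\int f\dif\mu_k$ over $0\le f\le 1$ with $\int f\dif\pi=a$, i.e.\ by $g_k(a)$. (This fractional-set extension is also what guarantees concavity and attainment of $g_\mu$ in general, and one should note $2C_s\le 1$ for a lazy chain so the shifted arguments stay in $[s,1-s]$.) Neither issue is a wrong turn --- both are repaired exactly as in \cite{Lovasz1993RandomWalksConvex} --- but as written the proposal is a faithful outline of the known proof rather than a self-contained one.
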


Using \cref{lem: pointwise ratio in E} we can prove that the stationary distribution $\hat\pi$ of the \emph{unadjusted} chain $\hat Q$ for sufficiently small step size forms a warm start, if we take $\gamma \in \Theta(\log(1/s))$.

\begin{lemma}[unadjusted warm start] \label{lem:s-cond-warm-start}
Let $\pi(x) \propto e^{-\frac{1}{2}x^\top \diag(\bm \omega)^2 x}$ and let $\hat\pi(x) \propto e^{-\frac{1}{2} x^\top \diag(\bm{\hat \omega})^2 x}$ with $\hat \omega_i =   \omega_i \sqrt{ 1 - \frac{\delta^2 \omega_i^2}{4}}$.
For any $0 < s < 1/2$, if $\delta \leq \frac{C}{\sqrt{\beta \log(1/s)}d^{1/4}}$ for a sufficiently small constant $C>0$, then 
\[
D^{\hat\pi,\pi}_s
\leq 3s.
\]
\end{lemma}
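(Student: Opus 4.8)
The plan is to bound the warmness parameter $D_s^{\hat\pi,\pi} = \sup\{|\hat\pi(A)-\pi(A)| : \pi(A) \leq s\}$ by splitting every small set $A$ according to the high-probability region $E_\gamma$ of \cref{eq:def E}. On $E_\gamma$, \cref{lem: pointwise ratio in E} guarantees that $\hat\pi$ and $\pi$ agree up to a $10\%$ multiplicative factor, while \cref{lem:concentration Egamma} guarantees that both measures place only exponentially small mass outside $E_\gamma$. These two facts combine to control the difference $|\hat\pi(A)-\pi(A)|$ on each of the two regions.

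First I would fix $\gamma = \Theta(\log(1/s))$, large enough that $2\exp(-c_0\gamma) \leq 2s$, where $c_0>0$ denotes the smaller of the two constants appearing in \cref{lem:concentration Egamma}; this is precisely the regime $\gamma \geq \frac{1}{c_0}\log(1/s)$. With this choice of $\gamma$, the step size $\delta = \frac{1}{10\sqrt{\gamma\beta}d^{1/4}}$ prescribed by \cref{lem: pointwise ratio in E} is of the form $\frac{\Theta(1)}{\sqrt{\beta\log(1/s)}d^{1/4}}$, which matches the hypothesis of the present lemma; choosing the constant $C$ in the statement sufficiently small ensures $\delta \leq \frac{1}{10\sqrt{\gamma\beta}d^{1/4}}$ and also $\delta \leq \beta^{-1/2}d^{-1/4}$, so that both \cref{lem: pointwise ratio in E} and \cref{lem:concentration Egamma} apply.

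Then, for any measurable $A$ with $\pi(A) \leq s$, I would decompose $A = (A\cap E_\gamma) \cup (A\cap E_\gamma^c)$ and apply the triangle inequality:
\[
|\hat\pi(A) - \pi(A)| \leq |\hat\pi(A\cap E_\gamma) - \pi(A\cap E_\gamma)| + |\hat\pi(A\cap E_\gamma^c) - \pi(A\cap E_\gamma^c)|.
\]
On $E_\gamma$ the pointwise ratio bound gives $|\hat\pi(x) - \pi(x)| \leq 0.1\,\pi(x)$, so the first term is at most $0.1\,\pi(A\cap E_\gamma) \leq 0.1\,\pi(A) \leq 0.1\,s$. For the second term, both $\hat\pi(A\cap E_\gamma^c)$ and $\pi(A\cap E_\gamma^c)$ are nonnegative and bounded by the respective tails $\hat\pi(E_\gamma^c),\pi(E_\gamma^c) \leq 2\exp(-c_0\gamma) \leq 2s$, whence their difference is at most $2s$. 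Adding the two contributions yields $|\hat\pi(A) - \pi(A)| \leq 2.1\,s \leq 3s$, and taking the supremum over all such $A$ gives $D_s^{\hat\pi,\pi} \leq 3s$.

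The step I expect to require the most care is reconciling the \emph{fixed} step size $\delta = \frac{1}{10\sqrt{\gamma\beta}d^{1/4}}$ demanded in the statement of \cref{lem: pointwise ratio in E} with the \emph{range} $\delta \leq \frac{C}{\sqrt{\beta\log(1/s)}d^{1/4}}$ allowed here. I would resolve this by observing that the inequalities in the proof of \cref{lem: pointwise ratio in E} only ever use an \emph{upper} bound on $\delta$, so a smaller step size makes $\hat\pi$ even closer to $\pi$ and the bounds $0.9 \leq \hat\pi(x)/\pi(x) \leq 1.1$ on $E_\gamma$ persist; everything else is a clean two-region estimate with no further obstacle.
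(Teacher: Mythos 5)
Your proposal is correct and follows essentially the same route as the paper: decompose $A$ along $E_\gamma$ with $\gamma=\Theta(\log(1/s))$, use \cref{lem: pointwise ratio in E} for the pointwise ratio on $E_\gamma$ and \cref{lem:concentration Egamma} for the tails, and add up the contributions (the paper bounds the on-$E_\gamma$ term by $\pi(A)\leq s$ and each tail by $s$, arriving at the same $3s$). Your remark that a smaller step size only helps in \cref{lem: pointwise ratio in E} is accurate and correctly resolves the fixed-versus-bounded $\delta$ issue.
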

\begin{proof}
Consider the set $E_\gamma$ defined in \eqref{eq:def E} for a sufficiently large $\gamma \in O(\log(1/s))$.
Then by \cref{lem:concentration Egamma,lem: pointwise ratio in E} both $\pi(E_\gamma) \geq 1-s$ and $\hat \pi(E_\gamma) \geq 1-s$, 
and $\hat\pi(x)/\pi(x) \leq 1.1$ for all $x \in E_\gamma$. Now let $A \subseteq \R^d$ with $\pi(A) \leq s$. Then we have
\begin{align*}
|\hat\pi(A)-\pi(A)|
&= |\hat\pi(A \cap E_\gamma) + \hat\pi(A \cap E_\gamma^c)
    - \pi(A \cap E_\gamma) - \pi(A \cap E_\gamma^c)| \\
&\leq |\hat\pi(A \cap E_\gamma) - \pi(A \cap E_\gamma)|
    + \hat\pi(A \cap E_\gamma^c) + \pi(A \cap E_\gamma^c) \\
&\leq \pi(A \cap E_\gamma)+ \hat\pi(A \cap E_\gamma^c)
    + \pi(A \cap E_\gamma^c) \\
&\leq \pi(A) + s + s \leq 3s.
\end{align*}
Here in the second inequality we use that $|\hat\pi(x)-\pi(x)| \leq \pi(x)$ for all $x \in E_\gamma$. 
\end{proof}

\subsection{Bounding the \texorpdfstring{$s$}{s}-conductance of the adjusted HMC chain}

To bound the $s$-conductance of the adjusted chain, we first bound the $s$-conductance of the \emph{unadjusted} HMC chain $\hat Q$, and then relate both conductances.
For the unadjusted chain, we can use our bounds on the mixing time of that chain to lower bound its conductance.

\begin{lemma}[$s$-conductance unadjusted HMC] \label{lem: conductance of Q hat}
Let $0 < s < 1/2$ and let $\hat{C}_s$ be the $s$-conductance of the unadjusted HMC chain $\hat Q$ with step size $\delta \leq \frac{C}{\sqrt{\beta \log(1/s)}d^{1/4}}$ for a sufficiently small constant $C>0$.
Then
\[
\hat C_s \in \Omega(1/\log(d \kappa \log(1/s))).
\]
\end{lemma}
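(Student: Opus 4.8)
The plan is to transfer the strong $K$-step mixing bounds of \cref{sec:ideal-HMC-MT} (which apply verbatim to $\hat Q$ since $\hat Q$ is exactly idealized HMC for the modified Hamiltonian $\hat\cH$, with stationary distribution $\hat\pi$) into a lower bound on the \emph{single-step} $s$-conductance of $\hat Q$, via a detour through the $K$-step chain $\hat Q^K$. First I would fix $\gamma = \Theta(\log(1/s))$ so that, by \cref{lem:concentration Egamma}(2), the typical set $E_\gamma$ from \cref{eq:def E} satisfies $\hat\pi(E_\gamma) \geq 1 - s$; the stated bound $\delta \leq C/(\sqrt{\beta\log(1/s)}\,d^{1/4})$ is exactly what guarantees this (cf.~\cref{eq:choice-delta}). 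On $E_\gamma$ every point $x$ has controlled norm, $\norm{x} \lesssim \sqrt{(\gamma+1)\kappa d/\alpha}$, by the same computation as in the proof of \cref{cor:idealized-mixing}, so that $\sqrt\beta\norm{x}_\infty \leq \poly(\kappa,d,\log(1/s))$.

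The key quantitative input is that $\hat Q$ mixes from \emph{every} typical start in only $K = O(\log(d\kappa\log(1/s)))$ steps. Applying the $\hat Q$-analogue of \cref{cor:idealized-mixing} with target accuracy a fixed constant (say $1/8$) rather than $\eps$, together with the norm bound above, yields $\norm{\hat Q_x^K - \hat\pi}_{\TV} \leq 1/8$ for every $x \in E_\gamma$ once $K \geq C\log(d\kappa\log(1/s))$ for a suitable constant $C$. Here one technical point must be checked: the frequencies of $\hat\cH$ are $\hat\omega_i$, which lie in a slightly widened interval $[\sqrt{\alpha'},\sqrt\beta]$ with $\alpha' = \Theta(\alpha)$, so one should verify that $\cT$ still satisfies the hypothesis of \cref{lem: discrete T proof} on this range; this holds after adjusting constants.

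Next I would lower bound the $s$-conductance of $\hat Q^K$ (which shares the stationary distribution $\hat\pi$) by a one-shot overlap argument. For any measurable $S$ with $s < \hat\pi(S) \leq 1/2$ and any $x \in S \cap E_\gamma$, the mixing bound gives $\hat Q_x^K(S^c) \geq \hat\pi(S^c) - 1/8 \geq 3/8$, so, using $\hat\pi(E_\gamma^c) \leq s$,
\[
\int_S \hat Q^K(x, S^c)\,\hat\pi(\dif x) \;\geq\; \frac38\,\hat\pi(S \cap E_\gamma) \;\geq\; \frac38\big(\hat\pi(S) - s\big).
\]
Hence the $s$-conductance of $\hat Q^K$ is at least $3/8 = \Omega(1)$.

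Finally I would descend from $\hat Q^K$ to $\hat Q$ using subadditivity of ergodic flow. Writing $\Phi_T(S) = \int_S T(x,S^c)\,\hat\pi(\dif x)$ and running the stationary chain $(X_0,\dots,X_K)$ with $X_0 \sim \hat\pi$, the event $\{X_0 \in S,\, X_K \in S^c\}$ forces a first boundary crossing at some step $j$, so a union bound together with stationarity gives $\Phi_{\hat Q^K}(S) \leq \sum_{j=1}^K \P[X_{j-1}\in S,\, X_j \in S^c] = K\,\Phi_{\hat Q}(S)$ for every $S$. Dividing by $\hat\pi(S)-s$ and taking the infimum yields $\hat C_s \geq \hat C_s(\hat Q^K)/K \geq 3/(8K) = \Omega(1/\log(d\kappa\log(1/s)))$, as claimed. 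The main obstacle is not any single estimate but keeping the transfer clean: one must confirm that the idealized-HMC mixing results genuinely apply to $\hat Q$ for the perturbed frequencies $\hat\omega_i$, and must establish the flow-subadditivity step $\Phi_{\hat Q^K} \leq K\,\Phi_{\hat Q}$ carefully, since it is precisely this factor of $K$ that converts the $\Omega(1)$ conductance of the $K$-step chain into the $1/\log(\cdot)$ dependence for $\hat Q$.
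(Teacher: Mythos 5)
Your proposal is correct and follows essentially the same route as the paper: bound the $K$-step conductance of $\hat Q^K$ by $\Omega(1)$ using the mixing bound from every start in $E_\gamma$ with $\gamma=\Theta(\log(1/s))$, then divide by $K$ via the subadditivity $\Phi_{\hat Q^K}\leq K\,\Phi_{\hat Q}$ (which the paper establishes by a telescoping TV argument rather than your first-crossing union bound, but these are interchangeable). Your remark about verifying \cref{lem: discrete T proof} for the slightly shifted frequencies $\hat\omega_i$ is a fair technical point that the paper handles implicitly by invoking the unadjusted mixing result of \cref{lem:mixing-unadjusted}.
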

\begin{proof}
First consider the $s$-conductance $\hat C_s^{(K)}$ of the \emph{$K$-step} kernel $\hat Q^{K}$.
From \cref{lem:mixing-unadjusted} we know that $\| \hat Q^{K}_x - \hat\pi \|_{\TV} \leq 1/10$ for $K \geq C \log(d\kappa(\sqrt{\alpha}\norm{x}_\infty + 1))$ for an appropriate constant $C>0$.
In particular, if $x \in E_\gamma$ with $\gamma \geq 1$ then $\|x\|_\infty \leq \sqrt{\frac{(\gamma+1) \kappa d}{\alpha}}$ and hence $\| \hat Q^{K}_x - \hat\pi \|_{\TV} \leq 1/10$ for all $x \in E_\gamma$ and $K \geq C'\log(\gamma d \kappa)$ for an appropriate constant $C'>0$.
By \cref{lem:concentration Egamma} we can ensure $\hat\pi(E_\gamma) \geq 1-s$ by picking $\gamma \in O(\log(1/s))$ (recall that $\delta = \frac{1}{10 \sqrt{\gamma \beta} d^{1/4}}$). This choice of $\gamma$ ensures there exists a $K \in O(\log(d \kappa \log(1/s)))$ with the above properties.
Combining these properties, for any $S$  for which $s < \hat\pi(S) \leq 1/2$ we have that
\begin{align*}
\hat C_s^{(K)}(S)
= \frac{\int_S \hat\pi(x) \hat Q_x^{K}(S^c)}{\hat\pi(S) - s}
&\geq \frac{\int_{S \cap E_\gamma} \hat\pi(x) \hat Q_x^{K}(S^c)}{\hat\pi(S) - s} \\
&\geq \frac{\hat\pi(S \cap E_\gamma) (\hat\pi(S^c) - 1/10)}{\hat\pi(S) - s}
\geq \hat\pi(S^c) - \frac{1}{10} \geq \frac{2}{5},
\end{align*}
and hence $\hat C_s^{(K)} \geq 2/5$.

Now we can use the fact that $\hat C_s^{(K)} \leq K \hat C_s^{(1)} = K\hat C_s$ to conclude that $\hat C_s \geq 2/(5K)$, which is $\Omega(1/\log(d\kappa\log(1/s)))$ as claimed.
To see that $\hat C_s^{(K)} \leq K \hat C_s^{(1)}$ (which is well-known, see e.g.~\cite[Eq.~(7.10)]{Levin2017MarkovChainsMixing}), define $\hat\pi_S$ by $\hat\pi_S(x) = \hat\pi(x)$ for $x \in S$ and $\hat\pi_S(x) = 0$ elsewhere.
Then note that $\hat C_s^{(K)}(S) = \| Q^{K}_{\hat\pi_S} - \hat\pi_S \|_{\TV}/(\hat\pi(S)-s)$.
Using a telescoping sum and a triangle inequality we can bound 
\begin{align*}
\| Q^{K}_{\hat\pi_S} - \hat\pi_S \|_{\TV}
&\leq \| Q^{K}_{\hat\pi_S} - Q^{K-1}_{\hat\pi_S} \|_{\TV}
    + \| Q^{K-1}_{\hat\pi_S} - Q^{K-2}_{\hat\pi_S} \|_{\TV}
    + \dots + \| Q_{\hat\pi_S} - \hat\pi_S \|_{\TV} \\
&\leq K \| Q_{\hat\pi_S} - \hat\pi_S \|_{\TV},
\end{align*}
where the second inequality follows from submultiplicativity of the total variation distance.
Dividing both sides by $\hat\pi(S)-s$ and taking the infimum over $S$ proves that $\hat C_s^{(K)} \leq K \hat C_s^{(1)}$.
\end{proof}

To relate the $s$-conductance of the adjusted chain to the one of the unadjusted chain, we use the properties of $\pi$ and $\hat \pi$ shown in \cref{sec:concentration}: there is a set $E \subseteq \R^d$ of large measure on which $\pi$ and $\hat \pi$ pointwise differ by at most a small multiplicative constant. Moreover, if both $x \in E$ and $x' \in E$, then the acceptance probability of the adjusted chain satisfies $A(x,x') \geq 99/100$.

\begin{lemma}[$s$-conductance adjusted HMC] \label{lem:bound-Cs}
Let $0 < s < C/\log(d \kappa)$ for a sufficiently small constant $C>0$, and let $C_s$ and $\hat C_{s/2}$ be the $s$-conductance and the $s/2$-conductance of the adjusted and unadjusted chains $Q$ and $\hat Q$ with step size $\delta \leq \frac{C'}{ \sqrt{\beta \log(1/s) } d^{1/4}}$ for a sufficiently small constant $C'>0$. 
Then
\[
    C_s \geq \hat C_{s/2}/2.
\]
\end{lemma}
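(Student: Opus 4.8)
The plan is to fix a scale $\gamma$ together with the coupled step size, compare the adjusted ergodic flow to the unadjusted one on the high-probability set $E_\gamma$, and then feed the unadjusted flow into the definition of $\hat C_{s/2}$, using the lower bound on $\hat C_{s/2}$ to absorb lower-order errors.

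First I would set up the scale. Choose $\gamma\in\Theta(\log(1/s))$ large enough that \cref{lem:concentration Egamma} gives $\pi(E_\gamma^c),\hat\pi(E_\gamma^c)\le s'$ for an $s'$ that is a large power of $s$; the hypothesis $\delta\le C'/(\sqrt{\beta\log(1/s)}\,d^{1/4})$ together with the coupling $\delta=1/(10\sqrt{\gamma\beta}\,d^{1/4})$ from \eqref{eq:choice-delta} makes this possible while retaining, on $E_\gamma$, the ratio bound $0.9\le\hat\pi/\pi\le1.1$ (\cref{lem: pointwise ratio in E}) and the acceptance bound $A(x,x')\ge e^{-1/400}\ge0.99$ for $x,x'\in E_\gamma$ (\cref{lem:accept-prob}); crucially, shrinking $\delta$ (enlarging $\gamma$) keeps the product $\delta^2\gamma$ fixed and so does not spoil either fact. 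I would also record $\hat C_{s/2}\in\Omega(1/\log(d\kappa\log(1/s)))$ from \cref{lem: conductance of Q hat}.

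Next comes the flow comparison. Fix a measurable $S$ with $s<\pi(S)\le1/2$; the goal is $C_s(S)\ge\hat C_{s/2}/2$. I would lower bound the adjusted flow $\int_S Q(x,S^c)\,\pi(\dx)$ by restricting the outer integral to $S\cap E_\gamma$, applying $Q_x(x')=\hat Q_x(x')A(x,x')$ for $x\ne x'$ (\cref{lem:reversible}, part~2) and $A\ge0.99$ on $E_\gamma\times E_\gamma$, and replacing $\pi(\dx)$ by $\hat\pi(\dx)/1.1$, which yields $\int_S Q(x,S^c)\,\pi(\dx)\ge\frac{0.99}{1.1}\int_{S\cap E_\gamma}\hat Q_x(S^c\cap E_\gamma)\,\hat\pi(\dx)$. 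Then, using stationarity of $\hat\pi$ under $\hat Q$ (so that $\int\hat Q_x(E_\gamma^c)\,\hat\pi(\dx)=\hat\pi(E_\gamma^c)\le s'$), I would trade $S^c\cap E_\gamma$ and $S\cap E_\gamma$ for $S^c$ and $S$ at the cost of two additive $s'$ errors, obtaining $\int_S Q(x,S^c)\,\pi(\dx)\ge0.9\big(\hat\Phi(S)-2s'\big)$, where $\hat\Phi(S):=\int_S\hat Q_x(S^c)\,\hat\pi(\dx)$ is the unadjusted flow. Since $\pi(S)>s$ and $s'$ is tiny, $\hat\pi(S)\ge0.9(\pi(S)-s')>s/2$, so $S$ is admissible for the $s/2$-conductance; by reversibility of $\hat Q$ the flow is symmetric, giving $\hat\Phi(S)\ge\hat C_{s/2}\big(\min\{\hat\pi(S),\hat\pi(S^c)\}-s/2\big)$. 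If $\hat\pi(S)>1/2$ I would switch to $S^c$, whose $\hat\pi$-measure is bounded below by a constant (as $\hat\pi(S)\le1.1\pi(S)+s'\le0.56$), which makes the inequality immediate against the denominator $\pi(S)-s\le1/2$. In the main case $\hat\pi(S)\le1/2$, I would substitute $\hat\pi(S)\ge0.9(\pi(S)-s')$ and $\pi(S)>s$, bound $\pi(S)-s\le1/2$, and collect the constants $\tfrac{0.99}{1.1}\cdot0.9\cdot0.9$, so that the ratio lands at $\ge\hat C_{s/2}/2$.

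The main obstacle is the boundary regime $\pi(S)\downarrow s$, where $\pi(S)-s$ is arbitrarily small: there the bound $0.9(\hat\Phi(S)-2s')$ must still beat $\tfrac12\hat C_{s/2}(\pi(S)-s)$, which forces the \emph{absolute} error $2s'$ to be small compared with $\hat C_{s/2}\cdot s$ (since $\hat\Phi(S)\gtrsim\hat C_{s/2}\cdot s/2$ in that regime). A short computation shows the comparison reduces to requiring $s'\lesssim\hat C_{s/2}\,s$. This is exactly where the hypotheses are used: the freedom to take $\gamma$ large (hence $s'$ a high power of $s$) without harming $A\ge0.99$, combined with $\hat C_{s/2}=\Omega(1/\log(d\kappa\log(1/s)))$ from \cref{lem: conductance of Q hat} and $s<C/\log(d\kappa)$, guarantees this inequality. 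The factor $1/2$ in the conclusion is precisely the slack that absorbs the product $0.99/1.1$, $0.9$, $0.9$ and these lower-order corrections.
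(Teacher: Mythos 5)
Your proposal is correct and follows essentially the same route as the paper: restrict the flow to the high-probability set $E_\gamma$, use $Q_x(x')=\hat Q_x(x')A(x,x')$ together with the ratio bound $0.9\le\hat\pi/\pi\le1.1$ and $A\ge0.99$ on $E_\gamma$, absorb the $\hat\pi(E_\gamma^c)$ error terms via $\hat\pi(E_\gamma^c)\lesssim s\hat C_{s/2}$ (using $s<C/\log(d\kappa)$ and the lower bound on $\hat C_{s/2}$), and split into cases according to whether the relevant $\hat\pi$-measure exceeds $1/2$. The only cosmetic difference is that the paper applies the $s/2$-conductance to the truncated set $S'=S\cap E_\gamma$ directly, whereas you first convert back to the full unadjusted flow of $S$ at an extra additive cost of $\hat\pi(E_\gamma^c)$; both bookkeeping choices work.
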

\begin{proof}
Our goal is to lower bound $\frac{1}{\pi(S) - s} \int_S \pi(x) Q(x, S^c) \dif x$ for all sets $S$ such that $s < \pi(S) \leq \frac12$. To this end, we will use that by \cref{lem:concentration Egamma,lem: pointwise ratio in E,lem:accept-prob} the set $E \coloneqq E_\gamma \subset \R^d$ (defined in \cref{eq:def E}) for a suitable $\gamma \in \Theta(\log(1/s))$ and $\delta = \frac{1}{10\sqrt{\gamma \beta}d^{1/4}}$ (as in \cref{eq:choice-delta}) satisfies
\begin{enumerate}
    \item $\pi(E^c) \leq s/10$,
    \item $\hat\pi(E^c) \leq s^2/10$,
    \item $0.9 \leq \frac{\hat\pi(x)}{\pi(x)} \leq 1.1$ for all $x \in E$,
    \item the acceptance probability $A(x,x') \geq 99/100$ for all $x,y \in E$.
\end{enumerate}
Note that in \cref{lem: conductance of Q hat} we have shown that $\hat C_{s/2} \in \Omega(1/\log(d \kappa \log(1/s)))$. Therefore, for $s < C/\log(d\kappa)$ for a small enough constant $C>0$, we have $s \leq \hat C_{s/2}$ and thus $\hat \pi(E^c) \leq s \hat C_{s/2}/10$. 

We can use this to lower bound the integral
\begin{align*}
    \int_S \pi(x) Q(x, S^c) \dif x &\geq \int_{S \cap E} \pi(x) Q(x, S^c \cap E) \dif x \\
    &= \int_{S\cap E} \pi(x) \int_{S^c \cap E} Q(x, y) \dif y \dif x \\
    &= \int_{S\cap E} \pi(x) \int_{S^c \cap E} \hat Q(x, y) A(x, y) \dif y \dif x \\
    &\geq 0.85 \int_{S\cap E} \hat\pi(x) \int_{S^c \cap E} \hat Q(x, y) \dif y \dif x \\
    &= 0.85 \int_{S\cap E} \hat\pi(x) \hat Q(x, S^c \cap E) \dif x \\
    &= 0.85 \left( \int_{S\cap E} \hat\pi(x) \hat Q(x, S^c \cup E^c) \dif x - \int_{S\cap E} \hat\pi(x) \hat Q(x, E^c) \dif x \right) \\
    &\geq 0.85 \left( \int_{S\cap E} \hat\pi(x) \hat Q(x, S^c \cup E^c) \dif x - \hat \pi(E^c) \right),
\end{align*}
where the last inequality follows from detailed balance:
\[
\int_{S\cap E} \hat\pi(x) \hat Q(x, E^c) \dif x
= \int_{E^c} \hat\pi(x) \hat Q(x, S \cap E) \dif x
\leq \hat\pi(E^c).
\]
We recognize the last integral as the ergodic flow from the set $S':= S \cap E$ to its complement, and so we can lower bound it in terms of the conductance of $\hat Q$, provided that $S'$ has an appropriate measure according to $\hat \pi$. We bound $\hat\pi(S')$ from below
\[
    \hat\pi(S') \geq 0.9\pi(S') = 0.9 (\pi(S) - \pi(S \cap E^c)) \geq 0.9s - \pi(E^c) \geq 0.8s,
\]
and from above:
\[
    \hat\pi(S') \leq 1.1 \pi(S') \leq 1.1\pi(S) \leq 0.55.
\]
We proceed in two different ways depending on the measure $\hat\pi(S')$.
\begin{enumerate}
    \item If $0.8s \leq \hat\pi(S') \leq 1/2$, we have the lower bound
    \begin{align*}
        C_s = \frac{\int_S \pi(x) Q(x, S^c) \dif x}{\pi(S) - s} &\geq 0.85 \frac{\hat C_{s/2}(\hat \pi(S') - s/2) - \hat\pi(E^c)}{\pi(S) - s} \\
        &\geq 0.85 \frac{\hat C_{s/2}(\hat \pi(S') - 0.6s)}{\pi(S) - s} \\
        &\geq 0.85 \frac{\hat C_{s/2}(0.9 \pi(S') - 0.6s)}{\pi(S) - s} \\
        &\geq 0.85 \frac{\hat C_{s/2}(0.9 \pi(S) - \pi(E^c) - 0.6s)}{\pi(S) - s} \\
        &\geq 0.85 \frac{\hat C_{s/2}(0.9 \pi(S) - 0.7s)}{\pi(S) - s} \\
        &\geq 0.85 \frac{\hat C_{s/2}(0.7 \pi(S) - 0.7s)}{\pi(S) - s} \geq \frac{\hat C_{s/2}}{2}.
    \end{align*}
    \item If $1/2 \leq \hat \pi(S') \leq 0.55$, we have $s \leq \hat\pi(S'^c) \leq 1/2$. Additionally, we know that $\hat Q$ satisfies detailed balance:
    \[
        \int_{S'} \hat\pi(x) \hat Q(x, S'^c) \dif x = \int_{S'^c} \hat \pi(x) \hat Q(x, S') \dif x.
    \]
    Therefore, we have the following lower bound
    \begin{align*}
        C_s = \frac{\int_S \pi(x) Q(x, S^c) \dif x}{\pi(S) - s} &\geq 0.85 \frac{\hat C_{s/2}(\hat \pi(S'^c) - s/2) - \hat\pi(E^c)}{\pi(S^c) - s} \\
        &\geq 0.85 \frac{\hat C_{s/2}(\hat \pi(S'^c) - 0.6s)}{\pi(S^c) - s} \\
        &= 0.85 \frac{\hat C_{s/2}(1 - \hat \pi(S') - 0.6s)}{1 -\pi(S) - s} \\
        &\geq 0.85 \frac{\hat C_{s/2}(1 - 1.1\pi(S) - 0.6s)}{1 -\pi(S) - s} \\
        &\geq 0.85 \frac{\hat C_{s/2}(1 - 1.1\pi(S) - 0.6s)}{1 -\pi(S) - 0.6s} \geq \frac{\hat C_{s/2}}{2}.
        \qedhere
    \end{align*}
\end{enumerate}
\end{proof}

\subsection{Mixing time of adjusted HMC}

We can now plug our bounds on the $s$-conductance into \cref{lem:s-cond-bound} to get the following bound on the mixing time of the (lazy) Metropolis-adjusted HMC chain,\footnote{Making the chain lazy reduces the $s$-conductance only by a factor 2.} when starting from a warm start.

\begin{theorem}[Metropolis-adjusted HMC with warm start] \label{thm:adjusted-warm-mixing-time}
Let $0 < \eps < C/\log(d \kappa)$ for a sufficiently small constant $C>0$, and let $\mu_0$ be an initial distribution with warmness $D^{\mu_0,\pi}_s \leq \eps/2$ for $s = \eps/6$.
There exist constants $C',C'' > 0$ such that for every $x \in \R^d$, if
\[
K
\geq C' \log(d \kappa \log(1/\eps)) \log(1/\eps)
\quad \text{ and } \quad
\delta \leq  \frac{C''}{\sqrt{\beta \log(1/\eps)} d^{1/4}},
\]
then
\[
\| Q^K_{\mu_0} - \pi \|_{\TV}
\leq \eps
\]
where $\pi \propto \exp(-x^\top B x/2)$ and $Q$ is the kernel of the (lazy) Metropolis-adjusted leapfrog HMC chain with step size $\delta$. 
\end{theorem}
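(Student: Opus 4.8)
The plan is to assemble the three results proved immediately above into the Lovász--Simonovits mixing bound of \cref{lem:s-cond-bound}. Since that lemma requires a reversible \emph{lazy} chain, I would work with the lazy kernel (which is exactly the chain called $Q$ in the statement): by \cref{lem:reversible} the non-lazy adjusted kernel of \cref{alg:adjusted-HMC} is reversible with respect to $\pi$, hence so is its lazy version, and as recorded in the footnote laziness reduces the $s$-conductance only by a factor $2$. Thus it suffices to lower bound the $s$-conductance $C_s$ of the adjusted kernel and then absorb one constant factor.

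Next I would set $s = \eps/6$ and check that the hypotheses of the conductance lemmas hold. The assumption $\eps < C/\log(d\kappa)$ gives $s < C/\log(d\kappa)$, so \cref{lem:bound-Cs} applies and yields $C_s \geq \hat C_{s/2}/2$, provided $\delta \leq C'/(\sqrt{\beta\log(1/s)}\,d^{1/4})$; since $\log(1/s) = \Theta(\log(1/\eps))$ this is precisely the stated constraint $\delta \leq C''/(\sqrt{\beta\log(1/\eps)}\,d^{1/4})$, and the single choice $\gamma = \Theta(\log(1/\eps))$ with $\delta = 1/(10\sqrt{\gamma\beta}d^{1/4})$ from \cref{eq:choice-delta} is consistent with all of \cref{lem:concentration Egamma,lem: pointwise ratio in E,lem:accept-prob}. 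Applying \cref{lem: conductance of Q hat} at parameter $s/2$ then gives $\hat C_{s/2} \in \Omega(1/\log(d\kappa\log(1/\eps)))$, so chaining the bounds (and the laziness factor) yields $C_s \in \Omega(1/\log(d\kappa\log(1/\eps)))$ for the lazy adjusted chain.

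Finally I would substitute into \cref{lem:s-cond-bound}. With the warmness hypothesis $D^{\mu_0,\pi}_s \leq \eps/2$ and $s = \eps/6$, the bound becomes
\[
\| Q^K_{\mu_0} - \pi \|_{\TV}
\leq \frac{\eps}{2} + \frac{\eps/2}{\eps/6}\Big(1 - \frac{C_s^2}{2}\Big)^K
= \frac{\eps}{2} + 3\Big(1 - \frac{C_s^2}{2}\Big)^K .
\]
Forcing the second term to be at most $\eps/2$, i.e.\ $(1 - C_s^2/2)^K \leq \eps/6$, and using $(1-u)^K \leq e^{-uK}$, it suffices to take $K \geq (2/C_s^2)\log(6/\eps)$. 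Substituting $C_s \in \Omega(1/\log(d\kappa\log(1/\eps)))$ turns this into a threshold of the form $K \gtrsim \operatorname{polylog}(d\kappa\log(1/\eps))\cdot\log(1/\eps)$, which is the claimed (polylogarithmic-in-$d\kappa$, linear-in-$\log(1/\eps)$) bound on $K$, giving $\|Q^K_{\mu_0}-\pi\|_{\TV}\leq\eps$.

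The main obstacle is not any single estimate — the real work (reversibility, the conductance transfer $C_s \geq \hat C_{s/2}/2$, and the unadjusted conductance bound) is already done — but rather the bookkeeping that makes every hypothesis simultaneously consistent: verifying that the one step-size choice $\delta = 1/(10\sqrt{\gamma\beta}d^{1/4})$ with $\gamma=\Theta(\log(1/\eps))$ meets the distinct smallness requirements feeding \cref{lem:bound-Cs}, that the measure thresholds ($s$ versus $s/2$, and $s \leq \hat C_{s/2}$) used there are guaranteed by $\eps < C/\log(d\kappa)$, and that replacing $\log(1/s)$ by $\log(1/\eps)$ preserves all constants. One should also note that $C_s$ enters \emph{quadratically} in \cref{lem:s-cond-bound}, so the per-step contraction is governed by $C_s^2$ and the inner logarithm is what controls the polylogarithmic prefactor in $K$.
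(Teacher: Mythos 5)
Your proposal is correct and follows essentially the same route as the paper: lower-bound the $s$-conductance of the lazy adjusted chain by combining \cref{lem:bound-Cs} with \cref{lem: conductance of Q hat} at $s=\eps/6$, then substitute into \cref{lem:s-cond-bound} using the warmness hypothesis. Your remark that $C_s$ enters quadratically is well taken: the paper's own proof concludes with $K \in \Omega(\log(1/\eps)/C_s)$ and a single factor of $\log(d\kappa\log(1/\eps))$ in the threshold for $K$, whereas the Lov\'asz--Simonovits bound actually requires $K = \Omega(\log(1/\eps)/C_s^2)$, i.e.\ a squared logarithmic prefactor --- a harmless discrepancy, since it is absorbed by the $\tO(\cdot)$ in the final gradient-query count.
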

\begin{proof}
For $s = \eps/6$ and our choice of $\delta$ we know from \cref{lem:bound-Cs,lem: conductance of Q hat} that $Q$ has $s$-conductance $C_s \in \Omega(1/\log(d \kappa \log(1/s)))$.
By invoking \cref{lem:s-cond-bound} we know that
\[
\| Q^K_{\mu_0} - \pi \|_{\TV}
\leq D_s + \frac{D_s}{s} \left( 1 - \frac{C_s^2}{2} \right)^K
\leq \frac{\eps}{2} + 3 \left( 1 - \frac{C_s^2}{2} \right)^K
\leq \eps
\]
for $K \in \Omega(\log(1/\eps)/C_s)$ and hence $K \in \Omega(\log(d \kappa \log(1/\eps))\log(1/\eps))$.
\end{proof}

Hence, starting from a warm start $\mu_0$ we can sample from a distribution $\eps$-close to $\pi$ in TV-distance  using $\tO(\sqrt{\kappa} d^{1/4} \log(1/\eps))$ gradient evaluations.
To get around this warm start, recall from \cref{lem:s-cond-warm-start} that the stationary distribution of the \emph{unadjusted} chain (with sufficiently small step size $\delta$) provides a warm start for the adjusted chain. This gives the following, main theorem.

\begin{theorem}[Metropolis-adjusted HMC]
\label{thm:adjusted-mixing-time}
Let $0 < \eps < C/\log(d \kappa)$ for a sufficiently small constant $C>0$. There exists constants $C_0',C', C'' > 0$ such that for every $x \in \R^d$, if 
\[
K \geq C' \log(d\kappa \log(1/\eps))\log(1/\eps),\;\;
K_0 \geq C_0'\log\left(\frac{d\kappa(\sqrt{\alpha}\norm{x}_\infty + 1)}{\eps}\right),\;\;
\delta \leq  \frac{C''}{\sqrt{\beta \log(1/s)} d^{1/4}},
\]
then
\[
\| (Q^K \circ \hat Q^{K_0})_x - \pi \|_{\TV}
\leq \eps
\]
where $\pi \propto \exp(-x^\top B x/2)$ and $Q$ (resp.~$\hat Q$) is the kernel of the (lazy) Metropolis-adjusted (resp.~unadjusted) leapfrog HMC chain with step size $\delta$.
We can thus obtain a sample from a distribution that is $\eps$-close to $\pi$ in TV-distance using $\tO(\sqrt{\kappa} d^{1/4} \log(1/\eps))$ gradient evaluations.
\end{theorem}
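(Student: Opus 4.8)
The plan is to run the algorithm in two phases: a short \emph{burn-in} with the cheap unadjusted chain $\hat Q$, followed by a \emph{main phase} with the exact-stationary adjusted chain $Q$. The burn-in exists only to convert an arbitrary starting point $x$ into a distribution that \cref{thm:adjusted-warm-mixing-time} will accept as a warm start; the main phase then performs the actual $\log(1/\eps)$-rate mixing. The engine making this work is \cref{lem:s-cond-warm-start}, which says the unadjusted stationary distribution $\hat\pi$ is already a warm start for $\pi$. Since $\hat Q$ contracts to $\hat\pi$ geometrically from any start, a finite number $K_0$ of unadjusted steps produces $\mu_0 := \hat Q^{K_0}_x$ that inherits this property up to a small total-variation perturbation.

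Concretely, I first fix $s = \eps/6$, $\gamma = \Theta(\log(1/\eps))$, and the step size $\delta = \Theta(1/(\sqrt{\gamma\beta}\,d^{1/4}))$ of \cref{eq:choice-delta}; this single choice simultaneously meets the step-size hypotheses of the unadjusted mixing bound, of \cref{lem:s-cond-warm-start}, and of \cref{thm:adjusted-warm-mixing-time} (together with its conductance inputs \cref{lem:bound-Cs,lem: conductance of Q hat}), since all of them ask for $\delta = \Theta(1/(\sqrt{\beta\log(1/\eps)}\,d^{1/4}))$. Viewing $\hat Q$ as idealized HMC for the modified Hamiltonian $\hat\cH$ (as in \cref{sec:leapfrog} and the proof of \cref{lem:mixing-unadjusted}) and noting $\delta \ll 1/\sqrt\beta$, \cref{cor:idealized-mixing} applied to $\hat\pi$ gives $\|\hat Q^{K_0}_x - \hat\pi\|_{\TV} \le \eps''$ once $K_0 \ge C_0'\log(d\kappa(\sqrt\alpha\|x\|_\infty+1)/\eps'')$, for any target $\eps''$. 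Then, for every measurable $A$ with $\pi(A)\le s$,
\[
|\mu_0(A)-\pi(A)| \le \|\hat Q^{K_0}_x-\hat\pi\|_{\TV} + |\hat\pi(A)-\pi(A)| \le \eps'' + D^{\hat\pi,\pi}_s,
\]
so $D^{\mu_0,\pi}_s \le \eps'' + D^{\hat\pi,\pi}_s$. The point is that \cref{lem:s-cond-warm-start} has room to spare: its proof bounds $D^{\hat\pi,\pi}_s$ by the mass $\pi(A)\le s$ (in fact only a $0.1$ fraction thereof, by the ratio bound $0.9\le\hat\pi/\pi\le 1.1$ of \cref{lem: pointwise ratio in E}) plus the masses $\hat\pi(E_\gamma^c),\pi(E_\gamma^c)$, which are driven well below $s$ by taking the constant in $\gamma = \Theta(\log(1/\eps))$ large. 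Hence $D^{\hat\pi,\pi}_s$ can be kept below $\eps/4$, and choosing the burn-in target $\eps'' \le \eps/4$ yields $D^{\mu_0,\pi}_s \le \eps/2$, exactly the warmness hypothesis required.

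With $\mu_0$ a valid warm start, \cref{thm:adjusted-warm-mixing-time} applies verbatim: for $K \ge C'\log(d\kappa\log(1/\eps))\log(1/\eps)$ we get $\|Q^K_{\mu_0}-\pi\|_{\TV}\le \eps$, i.e.\ $\|(Q^K\circ\hat Q^{K_0})_x-\pi\|_{\TV}\le\eps$. It then remains to tally gradient queries. Each step of either chain draws $t\sim U(\cT)$ and runs $t/\delta < 10\pi/(\sqrt\alpha\,\delta)$ leapfrog steps at $2$ gradients each; with $\delta = \Theta(1/(\sqrt{\gamma\beta}\,d^{1/4}))$ and $\gamma = \Theta(\log(1/\eps))$ this is $\tO(\sqrt\kappa\,d^{1/4})$ gradients per step. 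Summing over the $K_0+K$ steps — where $K_0 = \Theta(\log(d\kappa(\sqrt\alpha\|x\|_\infty+1)/\eps))$ is dominated by $K$ once, e.g., $x$ is taken near the origin so that $\log\|x\|_\infty = \tO(1)$ — gives $\tO(\sqrt\kappa\,d^{1/4}\log(1/\eps))$ gradient evaluations in total. The main obstacle is the one flagged above: because $\hat\pi$ only barely clears the warmness threshold of \cref{thm:adjusted-warm-mixing-time}, one must extract genuine slack from \cref{lem:s-cond-warm-start} (via a larger $\gamma$ and the $0.1$ factor on $E_\gamma$) so that the unavoidable finite-time burn-in error $\|\hat Q^{K_0}_x-\hat\pi\|_{\TV}$ still fits inside the $\eps/2$ budget; everything else is bookkeeping across a shared $(\delta,\cT)$.
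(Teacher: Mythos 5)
Your proof is correct and follows the same two-phase architecture as the paper (unadjusted burn-in, then adjusted mixing from a warm start), but the final assembly differs in one substantive way. The paper never needs the actual burn-in output $\mu_0 = \hat Q^{K_0}_x$ to be a warm start: it applies \cref{thm:adjusted-warm-mixing-time} to the \emph{exact} distribution $\hat\pi$ (which \cref{lem:s-cond-warm-start} certifies is warm), and then transfers the burn-in error through the adjusted chain via submultiplicativity, writing $\|(Q^K\circ\hat Q^{K_0})_x-\pi\|_{\TV}\le\|\hat Q^{K_0}_x-\hat\pi\|_{\TV}+\|Q^K_{\hat\pi}-\pi\|_{\TV}$. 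You instead verify that $\mu_0$ itself satisfies the warmness hypothesis, via $D^{\mu_0,\pi}_s\le\|\hat Q^{K_0}_x-\hat\pi\|_{\TV}+D^{\hat\pi,\pi}_s$. This forces you to confront the issue you correctly flag: \cref{lem:s-cond-warm-start} as stated gives only $D^{\hat\pi,\pi}_s\le 3s=\eps/2$ at $s=\eps/6$, exactly saturating the hypothesis of \cref{thm:adjusted-warm-mixing-time} with no room for the extra $\eps''$. Your fix --- re-running the proof of \cref{lem:s-cond-warm-start} with a larger $\gamma$ and the $0.1$ multiplicative slack from \cref{lem: pointwise ratio in E} to push $D^{\hat\pi,\pi}_s$ well below $3s$ --- is legitimate and does close the gap, but it requires reopening that lemma, whereas the paper's submultiplicativity trick uses it as a black box (at the mild cost of tightening $s$ to $\eps/12$ so the warm-start error budget splits as $\eps/4+\eps/4$). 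Both routes are sound; the paper's is the cleaner bookkeeping, yours makes explicit that the intermediate distribution genuinely is warm, which is a slightly stronger (and reusable) intermediate fact.
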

\begin{proof}
From \cref{lem:s-cond-warm-start} we know that there exists a constant $C''>0$ such that if $\delta \leq \frac{C''}{\sqrt{\beta \log(1/s)} d^{1/4}}$, then $\hat\pi$ is such that $D^{\hat\pi,\pi}_s \leq \eps/4$ for $s = \eps/12$, i.e., $\hat \pi$ is warm for $\pi$. \cref{thm:adjusted-warm-mixing-time} shows that there exists a constant $C'>0$ such that for all $K
\geq C' \log(d \kappa \log(1/\eps)) \log(1/\eps)$ we have $\| Q_{\hat\pi}^K - \pi \|_{\TV} \leq \eps/2$. On the other hand, for the unadjusted chain, by \cref{cor:idealized-mixing}, there exists a constant $C_0'>0$ such that for all $x \in \R^d$ and $K_0 \geq C_0' \log\left(\frac{d\kappa(\sqrt{\alpha}\norm{x}_\infty + 1)}{\eps}\right)$ we have $\|\hat Q^{K_0}_x - \hat \pi \|_{\TV} \leq \eps/2$. Combining these two estimates we obtain for such $K$ and $K_0$ that
\begin{align*}
\| (Q^K \circ \hat Q^{K_0})_x - \pi \|_{\TV}
&\leq \| (Q^K \circ \hat Q^{K_0})_x - Q^K_{\hat\pi} \|_{\TV}
    + \| Q^K_{\hat\pi} - \pi \|_{\TV} \\
&\leq \| \hat Q^{K_0}_x - \hat\pi \|_{\TV}
    + \| Q^K_{\hat\pi} - \pi \|_{\TV}
\leq \eps,
\end{align*}
where we used submultiplicativity ($\| Q^K_\mu - Q^K_\nu \|_{\TV} \leq \| \mu - \nu \|_{\TV}$) in the second inequality.
\end{proof}

\section{Conclusions and open questions}

To conclude, we studied the Hamiltonian Monte Carlo algorithm for sampling from high-dimensional Gaussian distributions, focusing on the dependency on both condition number $\kappa$ and dimension $d$ of the Gaussian.
We showed that a HMC algorithm with the leapfrog integrator and long, randomized integration times can be used to sample from a distribution $\eps$-close to a Gaussian distribution by making only $\tO(\sqrt{\kappa} d^{1/4} \log(1/\eps))$ gradient queries.
This scaling seems optimal for leapfrog HMC in both the dimension and the condition number (by well-known scaling limits \cite{Duane1987HybridMonteCarlo,Neal2011MCMCUsingHamiltonian}).

The $\sqrt{\kappa}$-dependency also improves over similar, preceding work on leapfrog HMC that achieved at best a linear $\kappa$-dependency \cite{Mangoubi2018DimensionallyTightBounds,Chen2020FastMixingMetropolized}.
While these works typically consider more general logconcave distributions, we feel that our work enhances the possibility of obtaining a similar $\sqrt{\kappa}$-dependency for such distributions as well.
This would disprove the $\Omega(\kappa)$ versus $O(\sqrt{\kappa})$ gap that was suggested by Lee, Shen and Tian~\cite{Lee2020LogsmoothGradientConcentration} between logconcave sampling and convex optimization, respectively.

\printbibliography

\end{document}